\setlist[itemize]{leftmargin=2.5em}
\setlist[itemize]{leftmargin=2.5em}
\setlist[enumerate]{leftmargin=2.5em}
\theoremstyle{plain}
\newtheorem{theorem}{Theorem}[section]
\newtheorem{proposition}[theorem]{Proposition}
\newtheorem{lemma}[theorem]{Lemma}
\newtheorem{corollary}[theorem]{Corollary}
\theoremstyle{definition}
\theoremstyle{remark}
\newtheorem{remark}[theorem]{Remark}
\theoremstyle{remark}
\newtheorem{fact}[theorem]{Fact}
\newcommand{\Pm}{\mathcal{P}}
\title{The Evolution of Statistical Induction Heads: \\ In-Context Learning Markov Chains}
\date{}
\author{
\begin{tabular}{c}
Benjamin L. Edelman$^1$,
Ezra Edelman$^{2}$,
Surbhi Goel$^{2}$,
Eran Malach$^1$,
Nikolaos Tsilivis$^{3}$\thanks{Work done while visiting Harvard University.}\\
\\
\normalsize{$^1$Harvard University, $^2$University of Pennsylvania, $^3$NYU}\\
\\
\normalsize{\texttt{bedelman@g.harvard.edu, ezrae@cis.upenn.edu}} \\
\normalsize{\texttt{surbhig@cis.upenn.edu, emalach@g.harvard.edu, nt2231@nyu.edu}}
\end{tabular}
}
\begin{document}

\maketitle

\def\doublecolumn{0}

\begin{abstract}
Large language models have the ability to generate text that mimics patterns in their inputs. We introduce a simple Markov Chain sequence modeling task in order to study how this in-context learning (ICL) capability emerges. In our setting, each example is sampled from a Markov chain drawn from a prior distribution over Markov chains. Transformers trained on this task form \emph{statistical induction heads} which compute accurate next-token probabilities given the bigram statistics of the context. During the course of training, models pass through multiple phases: after an initial stage in which predictions are uniform, they learn to sub-optimally predict using in-context single-token statistics (unigrams); then, there is a rapid phase transition to the correct in-context bigram solution. We conduct an empirical and theoretical investigation of this multi-phase process, showing how successful learning results from the interaction between the transformer's layers, and uncovering evidence that the presence of the simpler unigram solution may delay formation of the final bigram solution. We examine how learning is affected by varying the prior distribution over Markov chains, and consider the generalization of our in-context learning of Markov chains (ICL-MC) task to $n$-grams for $n > 2$.
\end{abstract}

\section{Introduction}

Large language models (LLMs) exhibit a remarkable ability to perform \emph{in-context learning} (ICL): learning from patterns in their input context \citep{Bro+20,dong2022survey}. 
The ability of LLMs to adaptively learn from context is profoundly useful, yet the underlying mechanisms of this emergent capability are not fully understood.

    


In an effort to better understand ICL, some recent works propose to study ICL in controlled synthetic settings---in particular, training transformers on mathematically defined tasks which require learning from the input context. For example, a recent line of works studies the ability of transformers to perform ICL of standard supervised learning problems such as linear regression \citep{Gar+22, akyurek2022learning, li2023transformers, wu2023many}. Studying these well-understood synthetic learning tasks enables fine-grained control over the data distribution, allows for comparisons with established supervised learning algorithms, and facilitates the examination of the in-context ``algorithm'' implemented by the network. That said, these supervised settings are reflective specifically of \emph{few-shot learning}, which is only a special case of the more general phenomenon of networks incorporating patterns from their context into their predictions. A few recent works \citep{bietti2023birth, Xie+22} go beyond the case of cleanly separated in-context inputs and outputs, studying in-context learning on distributions based on discrete stochastic processes.

The goal of this work is to propose and analyze a simple synthetic setting for studying ICL. To achieve this, we consider $n$-gram models \citep{brown1992class, shannon1948mathematical, chomsky1956three}, one of the simplest and oldest methods for language modeling. An $n$-gram language model predicts the probability of a token based on the preceding $n-1$ tokens, using fixed-size chunks ($n$-grams) of text data to capture linguistic patterns. Our work studies ICL of $n$-gram models, where the network needs to compute the conditional probability of the next token based on the statistics of the tokens observed in the input context, rather than on the statistics of the entire training data. We mainly focus on the simple case of $n=2$; i.e., bigram models, which can be represented as Markov chains.
We therefore consider ICL of Markov chains (ICL-MC): we train a transformer on sequences of tokens, where each sequence is produced by a different Markov chain, generated using a different transition matrix (see Figure~\ref{fig:main-plot} (left)).

\begin{figure}[t!]
\centering
\begin{subfigure}{0.45\textwidth}
    \includegraphics[trim= 0 100 250 100, clip, width=0.95\linewidth]{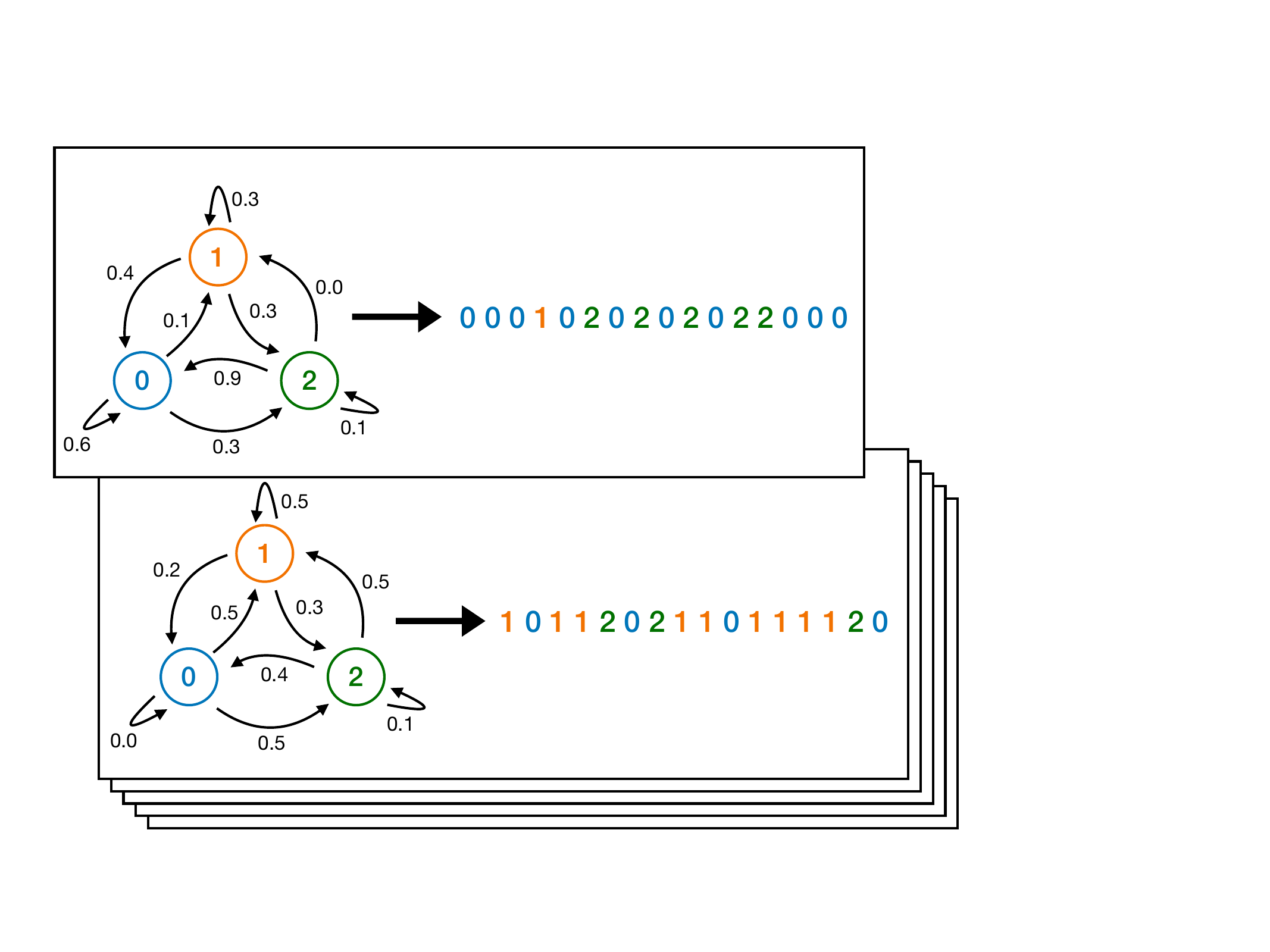}
\end{subfigure}
  \begin{subfigure}{0.54\textwidth}
    \includegraphics[trim= 0 0 0 0, clip, width=0.95\linewidth]{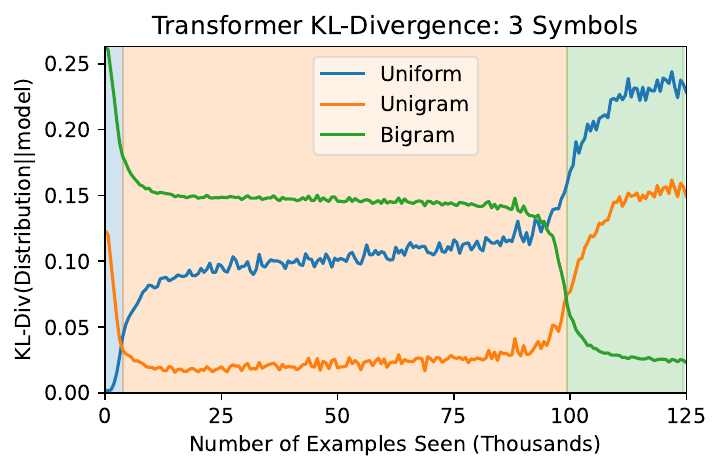}
\end{subfigure}    
\caption{(\textbf{left}) We train small transformers to perform in-context learning of Markov chains (ICL-MC)---next-token prediction on the outputs of Markov chains. Each training sequence is generated by sampling a transition matrix from a prior distribution, and then sampling a sequence from this Markov chain. (\textbf{right}) Distance of a transformer's output distribution to several well-defined strategies over the course of training on our in-context Markov chain task. The model passes through three stages: (1) predicting a uniform distribution, (2) predicting based on in-context unigram statistics, (3) predicting based on in-context bigram statistics. Shading is based on the minimum of the curves.} \label{fig:main-plot}
\end{figure}

By studying ICL-MC, we are able to replicate and study multiple phenomena that have been observed in ICL for LLMs, and identify new ones. We demonstrate our findings using a combination of empirical observations on transformers trained from scratch on ICL-MC and theoretical analysis of a simplified linear transformer. Our key findings are summarized below:
\begin{itemize}[leftmargin=*]
    \item \textbf{Transformers learn statistical induction heads to optimally solve ICL-MC}.
    Prior work studying ICL in transformers revealed the formation of \emph{induction heads} \citep{elhage2021mathematical}, a circuit that looks for recent occurrence(s) of the current token, and boosts the probabilities of tokens which followed in the input context.
    We show that in order to solve ICL-MC, transformers learn \textit{statistical} induction heads that are able to compute the correct \emph{conditional (posterior) probability} of the next token given all previous occurrences of the prior token (see the attention patterns in Figure~\ref{fig:attn}). We show that these statistical induction heads lead to the transformers achieving performance approaching that of the Bayes-optimal predictor. \looseness=-1
    \item \textbf{Transformers learn predictors of increasing complexity and undergo a phase transition when increasing complexity.} We observe that transformers display \textit{phase transitions} when learning Markov chains---learning appears to be separated into phases, with fast drops in loss between the phases. We are able to show that different phases correspond to learning models of increased complexity---unigrams, then bigrams (see Figure~\ref{fig:main-plot})---and characterize the transition between the phases. We also consider the $n$-gram generalization of our setting where the next token is generated based on the previous $n-1$ tokens.
    \item \textbf{Simplicity bias may slow down learning.} We provide evidence that the model's inherent bias towards simpler solutions (in particular, in-context unigrams) causes learning of the optimal solution to be delayed. Changing the distribution of the in-context examples to remove the usefulness of in-context unigrams leads to faster convergence, even when evaluated on the original distribution.
    \item \textbf{Alignment of layers is crucial.} We show that the transition from a phase of learning the simple-but-inadequate solution to the complex-and-correct solution happens due to an alignment between the layers of the model: the learning signal for the first layer is tied to the extent to which the second layer approaches its correct weights.
    \item \textbf{Alternating patterns in positional embeddings.} When we train transformers with relative position embeddings, the theoretical optimization analysis of our simplified model indicates that along the way to the correct solution, the first layer develops a bias toward looking back an \emph{odd} number of tokens, even though only looking back by 1 is clearly useful. We empirically observe this curious phenomenon in real transformers as well.
\end{itemize}

\begin{figure*}[t]
    \includegraphics[width=\textwidth]{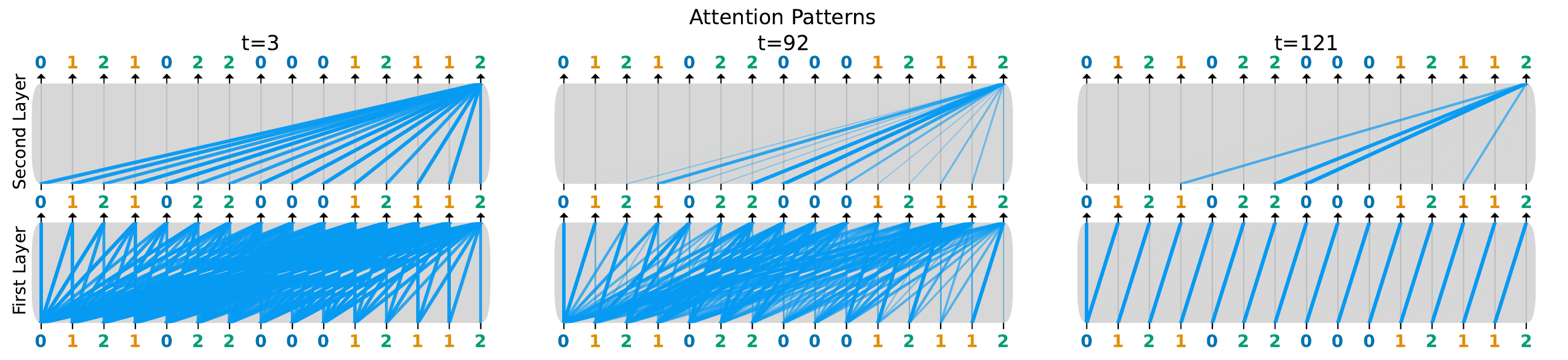}
    \caption{Attention for a fixed input at various time steps in training. These diagrams show where the attention heads are attending to at each layer. In the second layer, only the last token attention is shown. Tokens on top attend to tokens below them.
    Attention starts off uniform, but by the end of training, the layers are clearly acting the same as the induction head construction. Specifically, in the first layer each token is attending to the previous token. In the second layer, the current token, a $2$, is attending to tokens that followed $2$s, allowing bigram statistics to be calculated.
    Figure~\ref{fig:attn_heatmap} shows the full attention matrices as heatmaps.}
    \label{fig:attn}
\end{figure*}

\subsection{Related Work}
\paragraph{In-context Learning.} 
Recently, many works have focused on understanding how ICL emerges in language models. In \citep{chan2022data}, the authors discuss how properties of the data distribution promote ICL, with a focus on empirical observations.
\citet{Xie+22} studies a data model similar to ours, demonstrating that language models trained on Hidden Markov Models (HMMs) can learn in-context HMMs not found in the training data.
\cite{Abe+23} study the ability of transformers to segment the context into pairs of examples and labels and provide learning guarantees when the labeling is of the form of a sparse function.
The work of \cite{bietti2023birth} studies the dynamics of training transformers on a task that is reminiscent of our Markov chain setting but has additional complexities. Instead of drawing a fresh Markov chain for each sequence, in their task all sequences are sampled from the same Markov chain; after certain `trigger' tokens, the following `output' token is chosen deterministically within a sequence. Thus, successful prediction requires incorporating both global bigram statistics and in-context deterministic bigram copying, unlike in our setting where the patterns computed by \emph{statistical} induction heads are necessary and sufficient. As in our work, they identify multiple distinct stages of training and show how multiple top-down gradient steps lead to a solution.

Other works observe that ICL is possible due to the ability of transformers to implement gradient descent as a ``meta learning'' algorithm, and show some evidence that this indeed corresponds to how transformers learn in-context \citep{von2023transformers, dai2022can}. The work of \citet{li2023transformers} presents a theoretical framework for studying ICL, providing some risk bounds on ICL of supervised learning algorithms. \citet{guo2023transformers} construct synthetic in-context learning problems with a compositional structure, studying the representation capacity of transformers to learn these problems in-context. In \citep{hendel2023context}, the authors demonstrate that transformers learn to represent task vectors, providing a mechanistic analysis of ICL in LLMs. \citet{Kir+22} view ICL as a broad meta-learning paradigm, and observe that transformers meta-trained on real image classification tasks undergo similar phase transitions as the ones we observe in this work.

\paragraph{Induction Heads.} \citet{elhage2021mathematical} studies the formation of induction heads, sub-components of transformers that match previous occurrences of the current token, retrieving the token that succeeds the most recent occurrence. \citet{olsson2022context} studies in-context learning by analyzing the formation of induction heads in language models, showing empirical evidence that both large and small transformers display a phase transition in the ability to learn in-context. \citet{reddy2023mechanistic} also studies the formation of induction heads and their role in ICL, showing empirically that a three layer network exhibits a sudden formation of induction heads towards solving some ICL problem of interest.  \citet{bietti2023birth} study the effect of specific trigger tokens on the formation of induction heads.

\paragraph{Phase Transitions.} It has been observed in different contexts that neural networks and language models display a sudden drop in loss during their training process. This phase transition is often related to emergence of new capabilities in the network. The work of \citep{power2022grokking} observed the ``grokking'' phenomena, where the test loss of neural networks sharply drops, long after the network overfits the training data. \citep{chen2023sudden} shows another example of a phase transition in language model training, where the formation of specific attention mechanisms happen suddenly in training, causing the loss to quickly drop. \citet{barak2022hidden} observe that neural networks trained on complex learning problems display a phase transition when converging to the correct solution. Several works \citep{Kum+23,Lyu+23} attribute these phase transitions to rapid changes in the inductive bias of networks, while \citet{Mer+23} argue that the models are sparser after the phase change. \cite{SMK23} warn that phenomena in deep learning that seem to be discontinuous can actually be understood to evolve continuously once seen through the right lens.

\paragraph{Simplicity Bias.} Various works observed that neural networks have a ``simplicity bias'', which causes them to ``prioritize'' learning simple patterns first \citep{arpit2017closer, valle2018deep}. The work of \citep{kalimeris2019sgd} shows that SGD learns functions of increased complexity, first fitting a linear concept to the data before moving to more complex functions. \citep{shah2020pitfalls} shows that the simplicity bias of neural networks can sometimes be harmful, causing them to ignore important features of the data. \citet{chen2023sudden} demonstrate the effect of simplicity bias on language tasks that require understanding of syntactic structure. \citet{abbe2023sgd} provide a theoretical framework for understanding how the simplicity of the target function can govern the convergence time of SGD, describing how simple partial solutions can speed up learning; in contrast, in our setting, the unigram solution appears likely to be a distractor which delays learning of the correct solution.

\paragraph{Concurrent works} In parallel to this work, there have been a number of papers devoted to the study of similar questions regarding in-context learning and Markov Chains: \citet{Aky+24} empirically compare the ability of different architectures to perform in-context learning of regular languages. \citet{Hoo+24} observe similar stage-wise learning behaviors on transformers trained on language or synthetic linear regression tasks. \citet{Mak+24} study the loss landscape of transformers trained on sequences sampled from a single Markov Chain.



\section{Setup}
In this section, we describe our learning problem, several analytical properties of it, and present the neural networks that we will use for learning.
\subsection{ICL-MC Task}
Our learning task consists of Markov Chains with random transition matrices. The goal is to in-context estimate the transition probabilities from sampled sequences, in order to predict the next state.
Formally, a sample is a Markov Chain with state space $S = \left\{ 1, \ldots, k\right\}$ and a transition matrix $\mathcal{P}$ randomly sampled from some prior distribution, with $x_1$ drawn from some other prior distribution (potentially dependent on $\mathcal{P}$), and the rest of $\bm{x} = \left( x_1, \ldots, x_t \right)$ drawn from the Markov Chain. We primarily focus on the case where each row of the matrix is sampled from the Dirichlet distribution with concentration parameter $\bm{\alpha}$, i.e. $\mathcal{P}_{i, :} \sim \mathrm{Dir}(\bm{\alpha})$. We want to learn a predictor that, given context $x_1,\ldots, x_t$, predicts the next token, $x_{t+1}$. Note that this is an inherently non-deterministic task, even provided full information about the transition matrix, and as such it can better capture certain properties of language than previous in-context learning modeling approaches \citep{Gar+22}.

We focus on the case of $\bm\alpha = (1, \ldots, 1)^\top$ that corresponds to uniformly random transition probabilities between states. We draw the initial state $x_1$ from the stationary distribution $\bm \pi$ of the chain (which exists almost surely). We primarily consider the case where the number of states $k$ is 2 or 3.

In subsection~\ref{subsec:ngrams}, we consider the generalization of this setting to $n$-grams for $n > 2$. Instead of $\Pr(x_{t})$ being determined by $x_{t-1}$, we let $\Pr(x_{t})$ be determined by $x_{t-n+1}, \dots, x_{t-1}$, according to a conditional distribution $\mathcal{P}$ drawn from some prior. In particular, for each tuple of $n-1$ tokens, we sample the vector of conditional probabilities for the next state from a uniform Dirichlet distribution.

\subsection{Potential Strategies to Solve ICL-MC}
We adopt the Bayesian interpretation of in-context learning \citep{Xie+22}, in which a prior distribution is given by the training data, and, at test time, the model updates this prior given the in-context sequence. In this framework, we focus on two strategies for Bayesian inference: a \emph{unigram} strategy which assumes tokens in each sequence are i.i.d. samples, and the \emph{bigram} strategy which correctly takes into account dependencies among adjacent tokens.

\paragraph{1st strategy: Unigrams}

Since we let the Markov chain reach its stationary distribution (which exists a.s.), the optimal strategy across unigrams is just to count frequency of states and form a posterior belief about the stationary distribution. Unfortunately, the stationary distribution of this random Markov chain does not admit a simple analytical characterization when there is a finite number of states, but it can be estimated approximately. At the limit of $k \to \infty$, the stationary distribution converges to the uniform distribution \citep{BCC08}.

\paragraph{2nd strategy: Bigrams}

For any pair of states $i$ and $j$, let $\mathcal{P}_{ij}$ be probability of transitioning from $i$ to $j$. On each sample $\bm{x}$, we can focus on the transitions from the $i$-th state, which follow a categorical distribution with probabilities equal to $\left( \mathcal{P}_{i1}, \ldots, \mathcal{P}_{ik} \right)$. If we observe the in-context empirical counts $\{c_{ij}\}_{j=1}^k$ of the transitions, then $\mathcal{P}_{ij}$ is given by:
\begin{equation}
    \left( \mathcal{P}_{i1}, \ldots, \mathcal{P}_{ik} \right) \vert \bm{x} \sim \mathrm{Dir}(k, c_{i1} + \alpha_1, \ldots, c_{ik} + \alpha_k),
\end{equation}
where, recall, $\alpha_1, \ldots, \alpha_k$ are the Dirichlet concentration parameters of the prior.
Hence, each $\mathcal{P}_{ij}$ has a (marginal) distribution that is actually a Beta distribution:
\begin{equation}
    \mathcal{P}_{ij} \vert \bm{x} \sim \mathrm{Beta}\left( c_{ij} + \alpha_j, \sum_{j} \alpha_j + N_i - \alpha_j - c_{ij} \right),
\end{equation}
where $N_i$ is the total number of observed transitions from state $i$. As such, our best (point) estimate for each state $j$ is given by:
\begin{equation}
    \mathbb{E} \left[\mathcal{P}_{ij} \vert \bm{x} \right] = \frac{c_{ij} + \alpha_j}{N + \sum_i \alpha_i}.
\end{equation}
For the uniform Dirichlet, $\bm\alpha = (1, \ldots, 1)^\top$, it is $\mathbb{E} \left[\mathcal{P}_{ij} \vert \bm{x} \right] = \frac{c_{ij} + 1}{N_i + k}$.
\begin{remark}
    The bigram strategy implicitly assumes that the first token $x_1$ is sampled uniformly, as opposed to being sampled from the stationary distribution (which is used in our experiments and theoretical results). As the context length grows, the bigram statistics approach the Bayes optimal solution either way and this difference becomes negligible.
\end{remark}

\subsection{Architectures: Transformers and Simplifications}\label{ssec:models}
We are mainly interested in investigating how transformers \citep{Vas+17} can succeed in in-context learning this task. We focus on attention-only transformers with 2 layers with causal masking which is a popular architecture for language modeling. Given an input sequence $\bm{x}$, the output of an $n$-layer attention-only transformer is:
\begin{equation}\label{eq:tf-def}
    TF(\bm{x}) = P \circ (Attn_1+I) \dots \circ (Attn_n+I)
\end{equation}
Where $P\in \mathbb{R}^{k\times d}$ is a linear projection, $\mathbf{e}_{\bm{x}}\in \mathbb{R}^{t\times d}$ is an embedding of $\bm{x}$, and $Attn(\bm{x})$ is masked self attention with relative position embeddings \citep{Sha+18}, which is parameterized by $W_Q, W_K, W_V\in \mathbb{R}^{k\times d}, v\in \mathbb{R}^{t\times d}$:
\begin{equation}\label{eq:attn-def}
\begin{split}
    Attn(\mathbf{e}) &= \text{softmax}(\text{mask}(A))\mathbf{e} W_V\\
    A_{i,j} &= \frac{(\mathbf{e}_i W_Q+v_{i-j+1})(\mathbf{e}_j W_K)^\top}{\sqrt{d}}.
\end{split}
\end{equation}

During training, we minimize this loss:
\begin{equation}\label{eq:train_loss}
    L(\theta) = \underset{\substack{\bm{x} \sim \mathcal{P}\\ \mathcal{P} \sim \mathrm{Dir}(\bm{\alpha})^{\otimes k}}}{\mathbb{E}} \left[ \frac{1}{t} \sum_{p = 1}^t l\left(TF(\bm{x}; \theta)_p, x_{p+1}\right) \right],
\end{equation}
where $\theta$ denotes the parameters of the model and $l$ is a loss function such as the cross entropy or margin loss. For our experiments, we run on the standard cross-entropy loss. For our theoretical results, we analyze training under the margin loss that is a generalization of the hinge loss to the case of more than 2 classes: for hyperparameter $\Delta > 0$,
\begin{equation}\label{eq:margin_loss}
        l_M(f(e)_{p, :}, x_{p+1}) = \frac{1}{k} \sum_{\substack{i = 1,\\i \neq x_{p+1}}}^k \max \left\{ 0, \Delta + f(e)_{p, i} - f(e)_{p, x_{p+1}} \right\}.
\end{equation} 

We now show how a two-layer transformer can represent the optimal bigrams solution.


\begin{proposition}[Transformer Construction]\label{prop:transformer_construction}
A single-head two layer attention-only transformer can find the bigram statistics in the in-context learning markov chain task.
\end{proposition}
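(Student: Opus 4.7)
The plan is to explicitly construct weights for a two-layer attention-only transformer realizing the classical induction head circuit of \citet{elhage2021mathematical}, but in the \emph{statistical} regime where the second layer's softmax computes a uniform mean over all matching previous positions, rather than a hard copy from one position. The output at position $i$ will be arbitrarily close to the empirical conditional distribution $(c_{x_i,1}/N_{x_i},\ldots,c_{x_i,k}/N_{x_i})$ of transitions out of $x_i$ in the context, which matches the bigram strategy of the previous subsection up to the Dirichlet pseudocount (absorbable into a bias).

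I would begin by conceptually decomposing the residual stream into three slots: a ``current token'' slot holding the one-hot encoding $\mathbf{e}_{x_p}\in\mathbb{R}^k$, an initially-empty ``previous token'' slot of the same size, and a constant coordinate equal to $1$. The job of the first layer is to copy each token backward by one position into the previous-token slot. Setting $W_Q^{(1)}=0$ and picking $W_K^{(1)}$ so that $\mathbf{e}_j W_K^{(1)}$ is the same fixed nonzero vector $u$ for every $j$ (extracted via the constant coordinate), one can choose the relative position embedding $v_2$ (corresponding to offset $i-j+1=2$, i.e., $j=i-1$) to be a large multiple of $u$ and set $v_r=0$ for $r\neq 2$. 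This forces the first-layer attention at position $i$ to concentrate on $j=i-1$. With $W_V^{(1)}$ copying the current-token slot into the previous-token slot, the residual-augmented representation at position $i$ carries $x_i$ in the current-token slot and $x_{i-1}$ in the previous-token slot.

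For the second layer, I would set $W_Q^{(2)}$ and $W_K^{(2)}$ to read from the current-token and previous-token slots respectively (and take the second-layer $v$ to be zero) so that the logit $A_{i,j}^{(2)}$ is proportional to $\langle \mathbf{e}_{x_i},\mathbf{e}_{x_{j-1}}\rangle$, which equals $1$ when $x_i=x_{j-1}$ and $0$ otherwise. Taking $W_V^{(2)}$ to read the current-token slot at position $j$ and write it into a fresh output slot, the layer's output at position $i$ becomes $\frac{1}{N_{x_i}}\sum_{j:\, x_{j-1}=x_i}\mathbf{e}_{x_j}$, exactly the empirical bigram distribution of tokens following $x_i$; the final linear projection $P$ then reads off this slot into the $k$ logits.

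The main technical obstacle is that softmax attention is never an exact indicator, so the construction achieves the bigram statistics only in the limit as the magnitudes of $v_2$ (in the first layer) and of $W_Q^{(2)},W_K^{(2)}$ (in the second) grow. For any fixed context length $t$ and any desired approximation error $\varepsilon>0$, a sufficiently large but finite scale suffices uniformly, since the competing logits are separated by a fixed positive gap; this is standard and, I expect, the only place requiring care. A minor edge case is positions $i$ where $x_i$ has not yet appeared in the context (so $N_{x_i}=0$ and the sum is empty), which can be handled by a small constant bias in $W_V^{(2)}$ that mirrors the Dirichlet $+1$ pseudocount, or ignored since such positions contribute vanishingly to the loss.
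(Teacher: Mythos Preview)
Your construction is correct and is essentially the same induction-head argument the paper gives: the paper also splits the residual stream into three $k$-dimensional slots, uses the first layer's relative position embedding $v_2$ (with $W_Q^{(1)}=0$) to shift each token back by one into a ``previous'' slot, and then has the second layer match the current token against that previous-token slot and average the successor tokens via $W_V^{(2)}$ into a third slot read off by $P$. The only cosmetic difference is that you add a constant coordinate so that $\mathbf{e}_jW_K^{(1)}$ is literally the same vector $u$ for all $j$, whereas the paper instead takes $W_K^{(1)}=(cI,\mathbf{0},\mathbf{0})$ and $v_2=\mathbf{1}$, which yields the same constant inner product because the one-hot embedding always has unit $\ell_1$ norm; your extra discussion of the large-scale limit for softmax and the $N_{x_i}=0$ edge case is slightly more careful than the paper, which simply writes ``as $c$ approaches infinity'' and handles the probability-vs.-logit mismatch in a footnote.
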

\begin{proof}
Set the internal dimension $d=3k$, and choose $\mathbf{e}_{\bm{x}}$ to be one-hot embeddings, that is, $\mathbf{e}_{\bm{x}_i}=\delta_{\bm{x}_i}$, where $\delta$ is the Kronecker delta. We will call the parameters of attention layer $i$, $W_Q^{(i)}, W_K^{(i)}, W_V^{(i)}, v^{(i)}$. Let 
\begin{align*}
    v^{(1)}&=\delta_2 \mathbf{1}^\top & W_Q^{(1)}&=\mathbf{0} &W_K^{(1)}&=\begin{pmatrix}
    cI^{k\times k} &\mathbf{0}&\mathbf{0}
\end{pmatrix}&
    W_V^{(1)}&=\begin{pmatrix}
    \mathbf{0} & I^{k\times k}&\mathbf{0}.
\end{pmatrix}
\end{align*}
The attention of this first layer will focus on the previous token, $A_{i,i-1}=c$ and for all $j\neq i-1$, $A_{i,j}=\mathbf{0}$. Then, as $c$ approaches infinity, the first attention layer at index $i$ approaches $\begin{pmatrix}
    \mathbf{0} & \mathbf{e}_{\bm{x}_{i-1}} & \mathbf{0}
\end{pmatrix}$.
\begin{align*}
    v^{(2)}&=\mathbf{0}&
    W_Q^{(2)}&=\begin{pmatrix}
       c I^{k\times k}&\mathbf{0}&\mathbf{0}
    \end{pmatrix}
   & W_K^{(2)}&=\begin{pmatrix}
    \mathbf{0}&I^{k\times k}&\mathbf{0}
\end{pmatrix}&
    W_V^{(2)}&=\begin{pmatrix}
     \mathbf{0}&\mathbf{0}&I^{k\times k}.
\end{pmatrix}
\end{align*}
The second layer attention counts the bigrams of past tokens, as $c$ approaches infinity, at index $i,j$ the attention is (a constant times) a count of how many positions $p< i$, $\bm{x}_p = j$ and $\bm{x}_{p-1}=\bm{x}_i$. Letting $P=\begin{pmatrix}
     \mathbf{0} &\mathbf{0} &I^{k\times k}
\end{pmatrix}$, the output approaches the probabilities given by empirical bigram statistics.\footnote{Technically, the output of this is probabilities, not log probabilities as generally cross-entropy loss assumes. This can be approximated linearly by setting
$P=\begin{pmatrix}b\mathbf{1}^\top\mathbf{1} &\mathbf{0} &aI^{k\times k}\end{pmatrix}$ to change the output from $x$ to $ax+b$. In practice, this approximation can achieve close to Bayes optimal loss.}
\end{proof}

\paragraph{Simplified Transformer Architecture.} As we see from the construction, there are two main ingredients in the solution realized by the transformer;  (1st layer) the ability to look one token back and (2nd layer) the ability to attend to itself. For this reason, we define a \textit{minimal model} that is expressive enough to be able to represent such a solution, but also simple enough to be amenable to analysis. Let $e_{x_p}$ denote the one-hot embedding that corresponds to the state at position $p\in [t],$ and let $E$ be the $\mathbb{R}^{t \times k}$ one-hot embedding matrix. Then the model is defined as:
\begin{equation}\label{eq:min-mod-def}
    f(E) = \text{mask}\left(E W_k (M E)^T\right) E, \text{ where } M  = \begin{pmatrix} v_1 & 0 & \ldots & 0 \\ v_2 & v_1 & \ldots & 0 \\ \vdots & \vdots & \cdots & \vdots \\ v_t & v_{t-1} & \ldots & v_1 \end{pmatrix} \in \mathbb{R}^{t \times t} \text{ and } W_k \in \mathbb{R}^{k \times k}.
\end{equation}
Here $\text{mask} \left( \cdot \right)$ is a causal mask. Notice that the role of $W_k$ is to mimic the attention mechanism of the second layer and the role of $v$ is that of the positional embeddings. 

\begin{fact}
Both the bigrams strategy and the unigrams strategy can be expressed by the minimal model with a simple choice of weights. 
\begin{itemize}
    \item \textit{Bigrams:} For $v = (0, 1, 0 \ldots, 0)^\top$, $W_k = I_{k \times k}$, we have $f(E)_{p, i} = \sum_{t^\prime = 2}^{p} \mathds{1} \left\{x_{t^\prime} = i\right\} \mathds{1} \left\{x_{t^\prime - 1} = x_p\right\}$.
    \item \textit{Unigrams:} For $v = (1, 0, 0 \ldots, 0)^\top$, $W_k = 11^T$, we have $f(E)_{p, i} = \sum_{t^\prime = 1}^{p} \mathds{1} \left\{x_{t^\prime} = i\right\}$.
\end{itemize}
\end{fact}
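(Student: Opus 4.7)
The proof is a direct unpacking of the definition of the minimal model in Equation~\eqref{eq:min-mod-def}, so the plan is to carry out the matrix computation separately for each of the two choices of $(v, W_k)$, verifying that the claimed formulas for $f(E)_{p,i}$ emerge.

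First I would compute $ME$ for the two choices of $v$. Since $M$ is the lower-triangular Toeplitz matrix with $v$ on the diagonals, the $q$-th row of $ME$ equals $\sum_{s=1}^{q} v_s\, E_{q-s+1,:}$. For the bigram choice $v = (0,1,0,\dots,0)^\top$, this collapses to $(ME)_{q,:} = E_{q-1,:} = e_{x_{q-1}}^\top$ when $q \geq 2$, and $(ME)_{1,:} = 0$. For the unigram choice $v = (1,0,\dots,0)^\top$, we simply get $M = I$, so $(ME)_{q,:} = e_{x_q}^\top$.

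Next I would compute the entries of the pre-mask attention matrix $A := EW_k(ME)^\top$, namely $A_{p,q} = e_{x_p}^\top W_k (ME)_{q,:}^\top$. In the bigram case with $W_k = I$, this evaluates to $e_{x_p}^\top e_{x_{q-1}} = \mathds{1}\{x_p = x_{q-1}\}$ for $q \geq 2$ and to $0$ for $q = 1$. In the unigram case with $W_k = \mathbf{1}\mathbf{1}^\top$, we get $A_{p,q} = (\mathbf{1}^\top e_{x_p})(\mathbf{1}^\top e_{x_q}) = 1$ for all $p, q$. Applying the causal mask zeros out entries with $q > p$, and the final step is simply to expand $f(E)_{p,i} = \sum_{q \leq p} A_{p,q}\, (E)_{q,i} = \sum_{q \leq p} A_{p,q} \mathds{1}\{x_q = i\}$. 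Substituting the expressions for $A_{p,q}$ gives exactly the two claimed formulas, with the bigram sum starting from $q = 2$ thanks to the vanishing $q=1$ term coming from $v_1 = 0$.

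There is no real obstacle; this is a verification rather than a theorem. The only thing to be careful about is the edge case at $q = 1$ in the bigram construction (where $x_{q-1}$ is undefined) and confirming that the chosen $v$ handles it correctly by setting the first row of $M$ to zero, which automatically truncates the sum to $q \geq 2$.
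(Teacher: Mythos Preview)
Your proposal is correct and is exactly the natural verification; the paper in fact states this as a Fact without any accompanying proof, relying on the reader to plug the given $(v, W_k)$ into the definition~\eqref{eq:min-mod-def} (or equivalently the scalar form~\eqref{eq:model}) just as you do. Your handling of the $q=1$ edge case in the bigram construction is the only subtlety, and you address it correctly.
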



\begin{figure*}[t!]
    \includegraphics[width=0.65\textwidth]{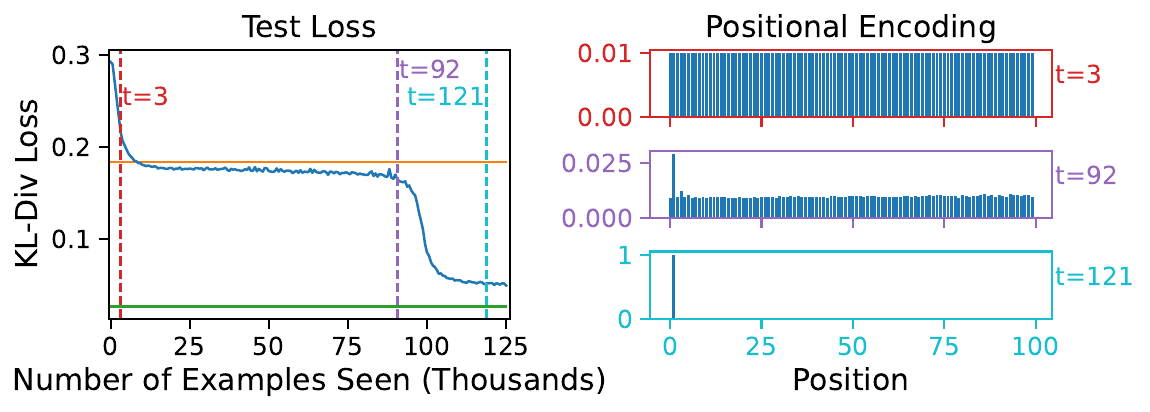}
    \includegraphics[width=0.36\textwidth]{figures/transformer_toy/3symb_similarity.pdf}
    \includegraphics[width=0.65\textwidth]{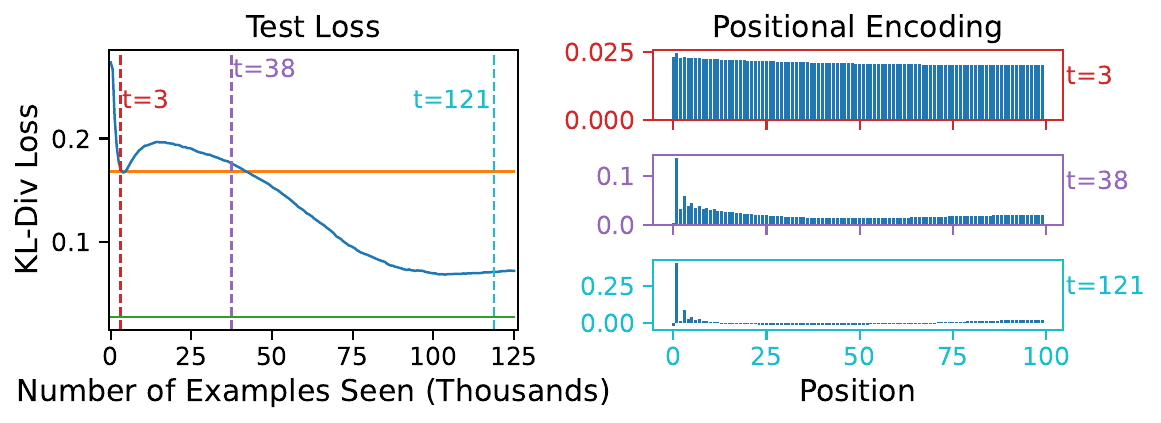}
    \includegraphics[width=0.36\textwidth]{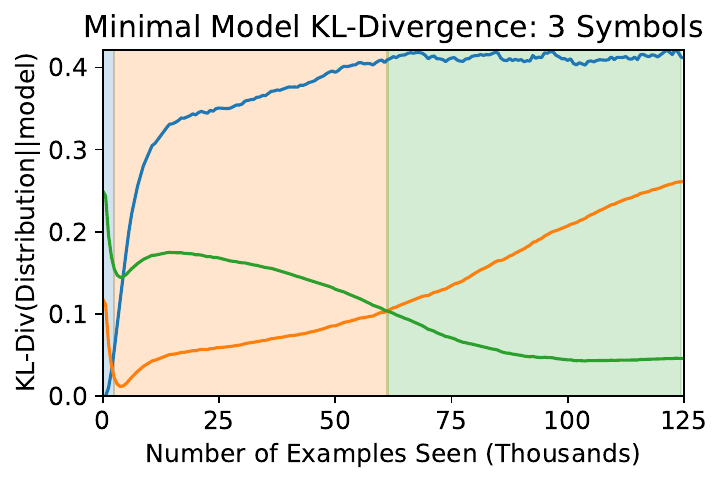}
    \caption{ A two layer transformer (\textbf{top}) and a minimal model (\textbf{bottom}) trained on our in-context Markov Chain task. A comparison of the two layer attention-only transformer and minimal model (\ref{eq:min-mod-def}) (with $v$ having constant uniform initialization, and $W_K$ initialized to $0$). The graphs on the left are test loss measured by KL-Divergence from the underlying truth. The green line shows the loss of the unigram strategy, and the orange line shows the loss of the bigram strategy. The middle graph shows the effective positional encoding (for the transformer, these are for the first layer, and averaged over all tokens).
    The graph on the right shows the KL-divergence between the outputs of the models and three strategy. The lower the KL-divergence, the more similar the model is to that strategy. 
    }
    \label{fig:3symb_pos}
\end{figure*}
\section{Empirical Findings and Theoretical Validation}


In this section, we present our empirical findings on how transformers succeed in in-context learning Markov Chains, we demonstrate the different learning stages during training and the sudden transitions between them, and draw analytical and empirical insights from the minimal model. 

\subsection{Transformers in-context learn Markov Chains hierarchically}

As can be seen in Figure~\ref{fig:3symb_pos}, all the models converge near the Bayesian optimal solution, suggesting that they learn to implement the bigram strategy. Curiously, however, the learning seems to be happening in stages; there is an initial rapid drop and the model quickly finds a better than random solution. Afterwards, there is a long period of only slight improvement before a second rapid drop brings the model close to the Bayes optimal loss. We observe that training a 1-layer transformer fails to undergo a phase transition or converge to the right solution - see Figure \ref{fig:one_layer}.

Interestingly, as can be seen from the horizontal lines in Figure~\ref{fig:3symb_pos}, the intermediate plateau corresponds to a phase when the model reaches the unigram baseline. We provide evidence that this is not a coincidence, and that after the initial drop in loss, the model's strategy is very similar to the unigram strategy, before eventually being overtaken by the bigram strategy. Some of the strongest such evidence is on the right in Figure \ref{fig:3symb_pos}, where we plot the KL divergence between model's prediction and the two different strategies. For both the strategies, their KL divergence from the model quickly goes down, with the unigram solution being significantly lower. Around the point of the second loss drop, the KL divergence between the model and the bigram solution decreases, while the other one increases, making it clear that the model transitions from the one solution to the other. This final drop is what has been associated to prior work with \textit{induction heads} formation \citep{olsson2022context}; special dedicated heads inside a transformer are suddenly being formed to facilitate in-context learning.


\paragraph{Mechanistic Evidence For Solutions Found By Transformer.}
To confirm how the two layer attention-only transformer solves ICL-MC, we inspected the attention in each layer throughout training. Figure \ref{fig:attn} shows the attention for a particular input during different parts of training. By the end of training, the attention consistently matches that of our construction, with the first layer attending to tokens one in the past, and the second layer attending to tokens that follow the same token as the current one. 
Note that even if the second layer of the transformer is mostly the same as at the end of training, if the first layer is different, then the weights shown for the second layer attention could differ dramatically. See also Figure \ref{fig:tf_loss} in the Appendix that displays how the models perform on different parts of the distribution during training.

\paragraph{Varying the data distribution - Unigrams slow down learning}

There are several interesting phenomena in the learning scenario that we just described, but it is the second drop (and the preceding plateau) that warrants the most investigation. In particular, one can ask the question: is the unigram solution helpful for the eventual convergence of the model, or is it perhaps just a by-product of the learning procedure?

To answer these questions, we define distributions over Markov chains that are in between the distribution where unigrams is Bayes optimal, and the distribution where unigrams is as good as uniform. As we see in Figure \ref{fig:interpolate}, the transformers that are being trained on the distribution where there is no unigrams ``signal" train much faster.

\subsection{Theoretical Insights from the Minimal Model}

To abstract away some of the many complicated components from the transformer architecture, we focus our attention now to the minimal model of Section \ref{ssec:models}. We train minimal models of eq.~\eqref{eq:min-mod-def}, starting from a deterministic constant initialization, by minimizing the cross entropy loss with sgd. Full experimental details can be found in the Appendix. Figure \ref{fig:3symb_pos} (bottom) displays the training curves for the minimal model. Similar to the transformer, the model learns to converge to the bigrams solution, spending however significantly less time, if any, to the unigram solution - even though they can represent it.

We now provide theoretical insights on how training progresses stage by stage and how this is achieved by the synergy between the two layers. As it turns out, there need to be at least two steps of gradient descent in order for both elements of the solution to be formed. The following lemma quantifies this.

\begin{lemma}\label{lem:min_model}
    Let the model defined as in eq.~\eqref{eq:model} and initialized with $W_k = c 11^T, v = c 1^T$. Then, the after one step of stochastic gradient descent on the margin loss of eq.~\eqref{eq:margin_loss} we have:
    \begin{equation*}
            W_k^{(1)} = \begin{pmatrix}c & c \\ c & c\end{pmatrix} + c \eta \left[ O(t^2) \begin{pmatrix}B & A \\ A & B\end{pmatrix}  + O(t) \right] \qquad
            v_j^{(1)}  = c + \frac{c \eta}{t} \left[ \frac{(t - j + 1)(t-j+2)}{2} D  + O(t) \right], j \in [t]
    \end{equation*}
    where $A, B, D > 0$ with $B \approx 4A$ (diagonal bias) and $\eta$ is the learning rate. After the second step, $v_2^{(2)}$ becomes dominant, i.e. $v_2^{(2)} > v_j^{(2)}, j = 1, 3, 4, \ldots, t$.
\end{lemma}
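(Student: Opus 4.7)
The approach is to expand the minimal model coordinate-wise, differentiate the margin loss at the symmetric initialization $W_k = cJ$ (with $J := \mathbf{1}\mathbf{1}^{\top}$) and $v = c\mathbf{1}$, take expectations over the Dirichlet$(\mathbf{1})$ prior on $\mathcal{P}$, and read off the leading-order terms in $c$, $\eta$, and $t$. Using the one-hot structure,
\begin{equation*}
f(E)_{p,i} \;=\; \sum_{q \le p}\mathds{1}\{x_q = i\}\sum_{r \le q} W_{k,\,x_p,\,x_r}\, v_{q-r+1},
\end{equation*}
so that $\partial f_{p,i}/\partial v_s = \sum_{q=s}^{p}\mathds{1}\{x_q=i\}\, W_{k,\,x_p,\,x_{q-s+1}}$ and $\partial f_{p,i}/\partial W_{k,ab} = \mathds{1}\{x_p=a\}\sum_{q \le p,\,x_q=i}(ME)_{q,b}$. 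For $c$ sufficiently small every margin term in $l_M$ is active, so $\partial l_M/\partial f_{p,i} = 1/k - \mathds{1}\{i=x_{p+1}\}$, and both gradients reduce to expectations of products of indicators of the form $\mathds{1}\{x_a = x_b\}$.

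For the first step, substituting the symmetric initialization into the $v_s$-gradient collapses it to $-\partial_{v_s}L|_{\mathrm{init}} = (c/t)\sum_{p \ge s}\sum_{q=s}^{p}\bigl(\mathds{1}\{x_q=x_{p+1}\}-1/k\bigr)$. Setting $\rho_d := \Pr(x_q = x_{p+1}) - 1/k$ with $d = p+1-q$, exchangeability of the states together with the positive covariance that the Dirichlet prior induces on entries of $\mathcal{P}$ give $\rho_d > 0$ with a well-defined large-$d$ limit $D$, so the counting identity $\sum_{p=s}^{t}(p-s+1) = (t-s+1)(t-s+2)/2$ produces the stated leading term $(t-j+1)(t-j+2)D/2$; geometric mixing of the chain gives $\sum_{d}|\rho_d - D| = O(1)$, absorbing the $O(t)$ correction. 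A parallel expansion of the $W_k$-gradient shows that exchangeability of the state labels forces the expected first-step update matrix (for $k=2$) into the block form $\begin{pmatrix}B & A\\ A & B\end{pmatrix}$ with prefactor $O(t^2)$ coming from the triple sum over $(p,q,r)$; the diagonal entry is strictly larger because the event $\{x_p = x_r,\ x_q = x_{p+1}\}$ captures a bigram coincidence not present off-diagonal, and enumerating the coincidence patterns of $(p,q,r,p+1)$ together with the Dirichlet$(1,1)$ moments yields the ratio $B/A \approx 4$.

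For the second step, decompose $W_k^{(1)} - cJ = c\eta\,O(t^2)\bigl[(B-A)I + AJ\bigr] + c\eta\,O(t)$. Substituting this perturbation into the $v$-gradient introduces a new term proportional to $(B-A)\mathds{1}\{x_p = x_{q-s+1}\}\bigl(\mathds{1}\{x_q = x_{p+1}\}-1/k\bigr)$, whose expectation is maximized precisely at $s = 2$: there $q-s+1 = q-1$ and the event $\{x_p = x_{q-1},\ x_q = x_{p+1}\}$ is a matched bigram $(x_{q-1},x_q) = (x_p,x_{p+1})$, whose probability strictly exceeds the product of its marginals under the Dirichlet prior; for any $s \neq 2$ the corresponding indicator conditions on non-adjacent positions and gives a strictly weaker higher-order correlation that decays with $|s-2|$. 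Summing over $(p,q)$ produces a dominant signed contribution to $v_2^{(2)} - v_j^{(2)}$ which, in the regime of small $c,\eta$ but with $\eta t$ large, overcomes the $O(c\eta)$ gap between $v_1^{(1)}$ and $v_2^{(1)}$ inherited from the first-step parabola, establishing $v_2^{(2)} > v_j^{(2)}$ for all $j \neq 2$. The main obstacle is the $W_k$-calculation: obtaining the precise ratio $B/A \approx 4$ and controlling the $O(t)$ error terms requires a careful enumeration of $(p,q,r)$-tuples according to which indices coincide with $p+1$ and with each other, and evaluation of the resulting Dirichlet$(1,1)$ moments, together with handling of boundary corrections at small $p$.
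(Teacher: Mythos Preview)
Your plan shares the same skeleton as the paper's proof: compute the population gradient at the symmetric initialization, use that all margin terms are active so the loss is linear, extract the leading-$t$ behavior, and then feed the first-step diagonal bias of $W_k$ back into the $v$-gradient for the second step. Your identification of $D$ as the stationary-limit return probability $\mathbb{E}\big[\sum_i \pi_i^2\big]-\tfrac{1}{k}$ and the counting identity $\sum_{p\ge j}(p-j+1)=\tfrac{(t-j+1)(t-j+2)}{2}$ match the paper's leading-order $v$ analysis exactly.

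Where you diverge is in the execution of the $W_k$ calculation and the second step. The paper's key device is the explicit eigendecomposition of the $2\times 2$ matrix $\mathcal{P}$: writing $(\mathcal{P}^n)_{ij}=(\beta_{ij}\lambda^n+\gamma_{ij})/(\lambda-1)$ with $\lambda=a+b-1$ turns every triple sum $\sum_{s\le t'\le p}$ into a sum of geometric series that can be evaluated in closed form. The leading $O(t^3)$ piece comes entirely from the constant term $\gamma_{im}\gamma_{li}$, and the constants $B$ and $A$ are then obtained as explicit double integrals over $(a,b)\in[0,1]^2$, namely $B=(\ln 256-5)/12\approx 0.045$ and $A=(7-10\ln 2)/6\approx 0.011$, giving the ratio $\approx 4$. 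Your proposed ``enumeration of coincidence patterns of $(p,q,r,p{+}1)$ with Dirichlet moments'' targets the same quantity but is much harder to push through to a precise ratio; the eigendecomposition is what makes these constants tractable, and you should adopt it rather than a raw combinatorial count.

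For the second step, one point needs correction. You assert that the correlation driving the new $v$-gradient term ``decays with $|s-2|$''. In fact the paper shows the relevant expectation carries a factor $\mathbb{E}_{a,b}[\,\cdots\,\lambda^{j-1}]$ which is \emph{negative} for odd $j$ and decreasing for even $j$; this is the even/odd alternation, not monotone decay in $|j-2|$. The paper establishes $v_2^{(2)}>v_1^{(2)}$ by directly comparing two explicit integrals (values $(1-\ln 2)/3$ versus $(7-10\ln 2)/2+(5-8\ln 2)/6$) and then uses the sign/monotonicity of the $\lambda^{j-1}$ factor to handle $j>2$. Your qualitative bigram-matching argument captures the right intuition for why $j=2$ is special, but to make it rigorous you will need the eigenvalue-based control of the $j$-dependence rather than a claimed monotone decay.
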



See Appendix \ref{sec:proofs} for the proof. We see that the gradient of $W_k$ at initialization has a clear diagonal bias, and so it starts driving the 2nd layer towards the correct solution - see Constructions in subsection \ref{ssec:models}. The gradient of the positional embeddings, $v$, however, is rather ``uninformative" to the correct coordinate that contains the one back position. Instead, it mostly favors the first position (look at current token), and has a quadratic decay. It is only after the second step that the 1st layer also starts realizing the solution, by growing $v_2^{(2)}$ - see Figure \ref{fig:min_model_margin}.

There are several interesting observations that follow from Lemma \ref{lem:min_model} and can help us understand better the two stages of learning - both in the toy model and in the transformer:

\paragraph{2nd layer is learnt first}

It has been observed before in a similar bigram learning setting with a two-layer transformer that the model might be learning first the 2nd layer \citep{bietti2023birth}. We also make similar observations in our experiments with the minimal model and the transformers \ref{fig:attn}, although the evidence is not so clear in the latter case. For the minimal model, the gradient calculations, clearly suggest that starting from a default initialization, it is only the 2nd layer that quickly ``picks up" the right solution.

\paragraph{Passing by unigrams.} The gradient of the positional embeddings $v$ at initialization delivers no information about the optimal solution, but has a quadratic structure instead that mostly favors $v_1$. If the gradient of the 2nd layer is small in scale (which can happen either due to the initialization of $v$ or to a small learning rate), then $W_k$ will not deviate much from the uniform initialization. Notice, then, that a uniform $W_k$ together with a biased first coordinate of $v$ correspond precisely to the unigram solution in this model. Thus, it is clear in this case that $W_k$ not being aligned with the solution yet, not only slows down $v$, but also biases it towards a simpler solution. We believe that the same mechanism, perhaps manifested differently, is also responsible for the hierarchical learning in the transformer. 

\paragraph{Even/odd pattern.} An interesting observation that comes out from the calculations is the form of $v_j$ after 2 steps; the gradient amplifies much more the even coordinates than the odd ones, with a scale that follows a geometric series. In fact, the ratio is closely related to the moments of the eigenvalue of the transition matrix. This way the second coordinate starts growing larger than the rest and the model eventually learns to represent the bigrams solution. As a byproduct, however, the rest of the even coordinates also grow in magnitude, despite not being part of the optimal solution. Perhaps surprisingly, we are able to also identify the same spurious pattern in the transformers! In the transformer, if the positional embeddings are dominating attention (that is, if $W_Q$ is unimportant in that layer), we can define an analogue of the minimal model: $\hat{v}_i = \text{softmax}(e_{x_i}W_k v^T)$. Unlike $v$ in the minimal model, $\hat{v}_i$ depends on the token at position $i$, but in practice, during most of training, $\hat{v}_i$ is similar regardless of the value of $x_i$. As can be seen on the top of Figure~\ref{fig:3symb_pos} (see also Figure~\ref{fig:2symb_pos}), shortly before the second drop in the loss as the induction head forms around $t=92$, the positional embeddings start showing the same even/odd pattern for the first positions.

\begin{figure*}[t!]
\centering
\begin{subfigure}{0.37\textwidth}
\includegraphics[width=\textwidth]{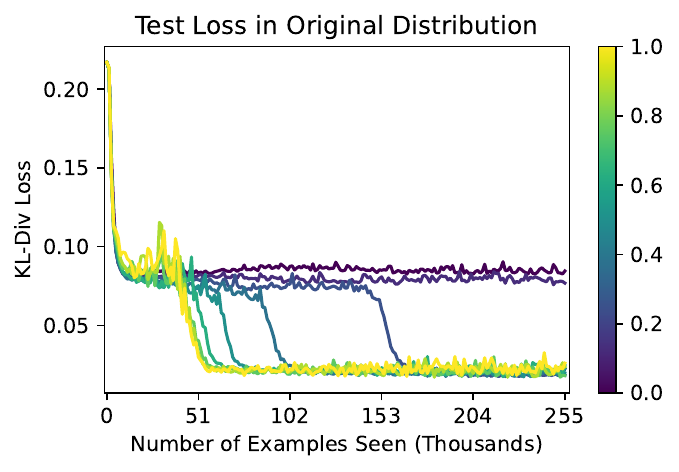}
\caption{} \label{fig:interpolate}
\end{subfigure}
\hspace{0.01\textwidth}
\begin{subfigure}{0.6\textwidth}
   \includegraphics[width=\textwidth]{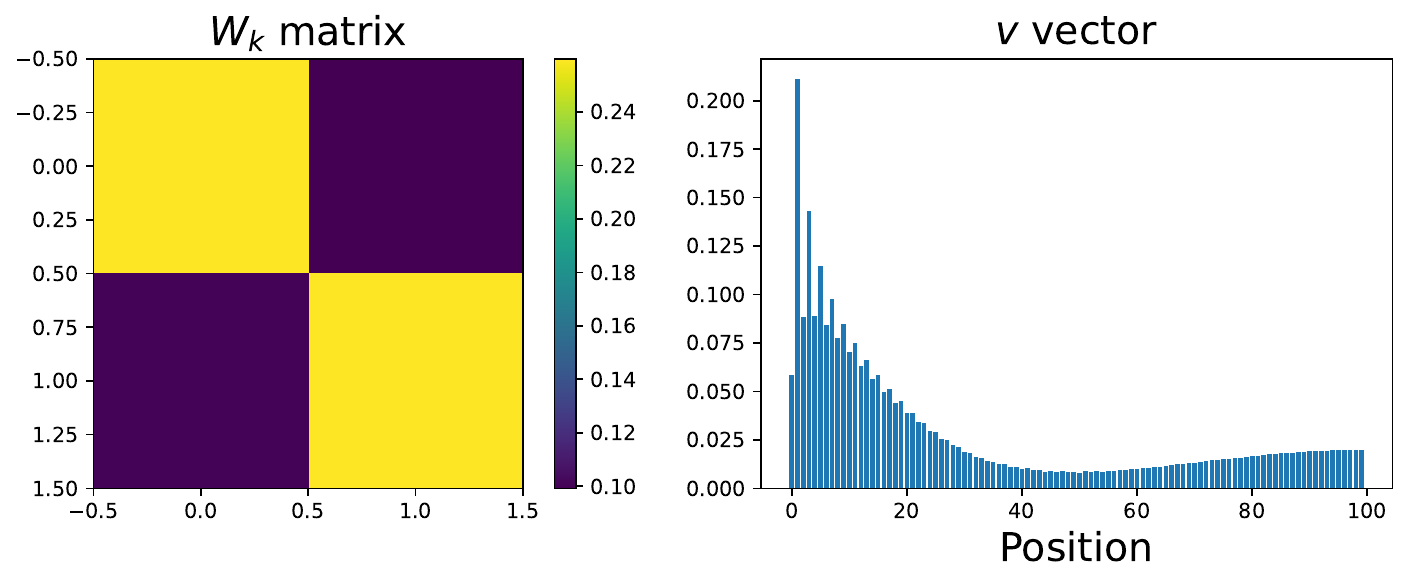} 
   \caption{}\label{fig:min_model_margin}
\end{subfigure}

\caption{(\textbf{a}) Transformers trained on different data distributions and evaluated at the original distribution. Color displays a smooth interpolation between data distributions of uninformative unigrams strategy (purple, 0) and Bayes optimal unigrams (yellow, 1). When there is not much signal from unigrams, learning progresses faster without long plateaus. See in Appendix \ref{para:interpolate} for a description of the data distributions. (\textbf{b}) Training of the minimal model in In-Context Learning Markov Chains with $k=2$ states. (\textbf{left}) The heatmap of the 2nd layer ($W_k$ matrix) that learns to be close to diagonal. (\textbf{right}) The values of the positional embeddings (1st layer) that display the curious even/odd pattern. Timestep corresponds to a phase when the model has started implementing the bigrams solution, but has not converged yet.}
    
\end{figure*}

\begin{figure*}[t!]
\centering
    \includegraphics[width=0.49\textwidth]{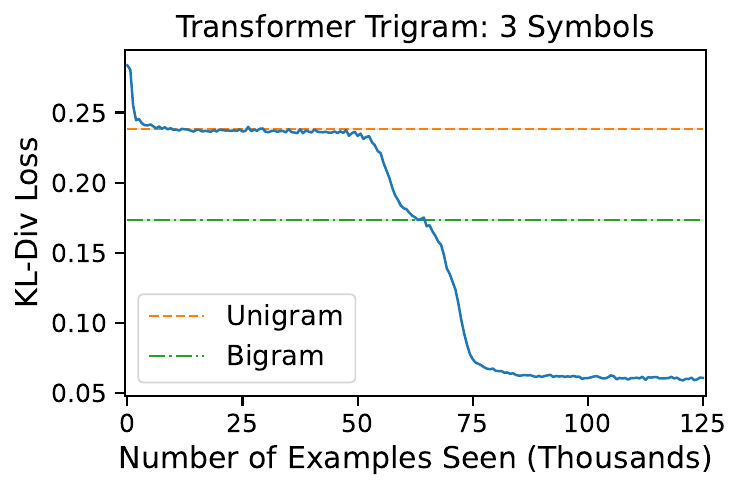}
    \includegraphics[width=0.49\textwidth]{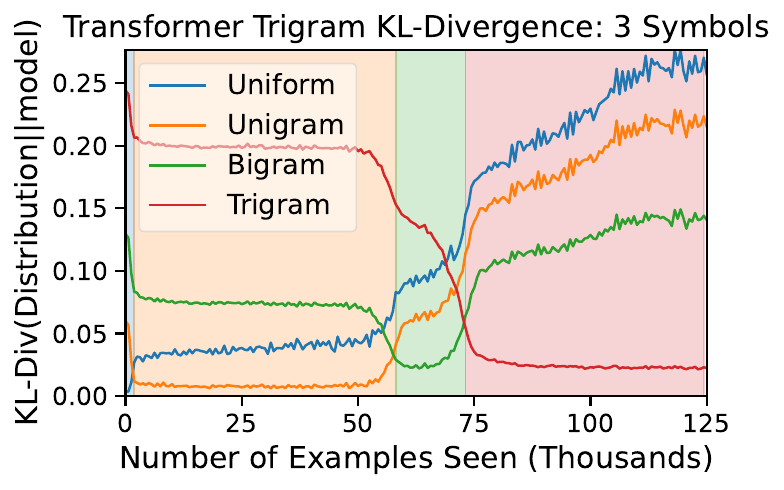}
    \caption{Three-headed transformer trained on In-Context Learning 3-grams (trigrams), with context length 200. \textbf{Left:} Loss during training. The model hierarchically converges close to the Bayes optimal solution. \textbf{Right:} KL divergence between the model and different strategies during training. As we observe, there are 4 stages of learning, each of them corresponding to a different algorithm implemented by the model.}\label{fig:tri}  
\end{figure*}
\paragraph{If unigram solution is not correlated, it is skipped.} By tweaking the data distribution, we can probe different parts of learning. In particular, when we focus on a special case of distribution where there is no unigrams solution, transformers learn much faster and do not plateau for long in the intermediate stage - see Figure \ref{fig:interpolate}. The calculations also seem to confirm this (Corollary \ref{cor:aequalb} in the Appendix): the gradient of $v$ at the first step evaluated on this special distribution shows that, in contrast to the default case, there is no ``heavy'' preference towards the first coordinate. Then, once there is enough signal from the second layer, training progresses much more smoothly.

We now show that two-stage learning is not only necessary in some sense, but also sufficient to reach the bigrams solution. We analyze the optimization process of the minimal model and discover, that much like in the experiments with the transformers, there are separate phases that correspond to learning of different parts of the model. In particular, our analysis is in a simplified setting where the data distribution changes in the second step; the transition matrices are no longer sampled uniformly, but they are instead the ``hard'' examples of this task and contain the most ``information'' about the solution. This idea is closely related to the technique of curriculum learning \citep{Ben+09} and is naturally motivated in many settings. In this setting, we are able to show that two steps of gradient descent will reach the bigrams solution.

\begin{proposition}(Informal)
Consider the minimal model of eq.~\eqref{eq:min-mod-def} being trained with online SGD on the margin loss~\eqref{eq:margin_loss} with in-context loss~\eqref{eq:train_loss}. With appropriate choice of learning rates and weight decay in the 2nd layer, two steps of gradient descent with curriculum learning reach the bigrams solution.
\end{proposition}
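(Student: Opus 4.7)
The proof tracks the two parameter blocks $(W_k, v)$ through two successive SGD updates and shows that the resulting parameters realize, up to an overall positive scaling, the bigrams construction $W_k = I$, $v = e_2$ of the Fact in Section~\ref{ssec:models}. Step one is the content of Lemma~\ref{lem:min_model}; step two, performed on a curriculum distribution and using the now-aligned second layer as a catalyst for the first, is the new ingredient.

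\textbf{First step.} Apply Lemma~\ref{lem:min_model} directly at the uniform initialization $W_k = c\mathbf{1}\mathbf{1}^\top, v = c\mathbf{1}$. Compose the SGD update on $W_k$ with a weight-decay step whose coefficient is tuned to cancel the uniform baseline $c\mathbf{1}\mathbf{1}^\top$; by the $B > A$ diagonal bias established in the Lemma, what remains is $W_k^{(1)} = \alpha I + (\text{small offset})\cdot \mathbf{1}\mathbf{1}^\top$ with $\alpha > 0$ of order $c\eta t^2 (B - A)$. The positional vector $v^{(1)}$ is still essentially the uniform vector, with only the small quadratic tilt toward $v_1$ from the Lemma, which will be neutralized by the curriculum in the next step.

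\textbf{Second step with curriculum.} Replace the uniform-Dirichlet prior by a curriculum supported on transition matrices under which the unigram posterior is uninformative but the bigram posterior is highly informative -- the symmetric regime analyzed in Corollary~\ref{cor:aequalb}, where the quadratic-in-$t$ gradient bias toward $v_1$ cancels by a direct symmetry argument. Substituting the post-step-one parameters $W_k^{(1)} \approx \alpha I$ into the attention score $E W_k (ME)^\top$, the gradient $\partial L/\partial v_j$ at position $p$ reduces to a weighted count over past positions $p'$ with $x_{p'} = x_p$ at offset $j = p - p' + 1$; because the Markov property of the curriculum concentrates the correlation $\Pr[x_{p'} = x_p]$ at $j=2$, the gradient is maximal at the second coordinate of $v$. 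Choosing a learning rate matched to this gap, one obtains $v_2^{(2)}$ strictly above every other $v_j^{(2)}$ by more than the margin $\Delta$, and a second weight-decay pass on $W_k$ preserves the diagonal structure. Evaluating the minimal model at the resulting parameters then recovers the in-context bigram count predictor up to a lower-order perturbation that the margin absorbs, which by the Fact in Section~\ref{ssec:models} is the bigrams solution.

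\textbf{Main obstacle.} The delicate step is proving \emph{strict} dominance $v_2^{(2)} > v_{2j}^{(2)}$ for every $j \ge 2$. As the even/odd discussion in the paper emphasizes, the gradient structure amplifies every even coordinate of $v$, with magnitudes tied to moments of an eigenvalue of the sampled transition matrix; on some natural curricula (e.g. pure flip chains for $k=2$) those moments are all of comparable order and the separation collapses. The curriculum must therefore be engineered so that moments beyond the first decay geometrically relative to the first while still killing the $v_1$ bias -- for instance, by mixing near-deterministic transitions (which make the bigram signal informative) with chains of larger spectral gap (which suppress higher-order $v_{2j}$ amplification), with mixing weights chosen so the net one-step gradient points strictly along $e_2$. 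Identifying an explicit such distribution and closing the quantitative bound so that the composed two-step map lands within an $O(\Delta)$ neighbourhood of the construction is where I expect the bulk of the bookkeeping to go.
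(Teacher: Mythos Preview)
Your two-phase scaffolding (step one biases $W_k$ toward the diagonal; step two uses a doubly-stochastic curriculum to align $v$) matches the paper's, but the resolution of what you correctly flag as the main obstacle diverges substantially, and your instinct there points in the wrong direction.

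The paper does not engineer a mixture. Its second-step curriculum is simply $a = b \sim \mathrm{Unif}([\tfrac{1}{2}-\epsilon,\tfrac{1}{2}+\epsilon])$ for small $\epsilon$. Then the eigenvalue is $\lambda = 2a-1 \in [-2\epsilon,2\epsilon]$, and the even-coordinate amplification you worry about is governed by $\mathbb{E}\big[(a-\tfrac{1}{2})(2a-1)^{j-1}\big]$, which vanishes for odd $j$ and equals a constant times $\epsilon^{j+1}$ for even $j$. The strict dominance $v_2^{(2)} \gg v_4^{(2)} \gg \cdots$ is therefore automatic and geometric in $\epsilon$; no bookkeeping over a mixture is needed. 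Your suggestion to include near-deterministic chains is the opposite of what helps: those have $|\lambda|$ near $1$, which is precisely the regime where all even moments are of comparable order and the separation collapses. Concentrating near $a=\tfrac{1}{2}$ simultaneously kills the unigram signal \emph{and} forces the moment decay; there is no tension to resolve by mixing.

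Two smaller discrepancies. The paper places weight decay in the \emph{second} step with coefficient $\lambda_{wd}=1$ (a full reset), not a tuned decay in step one; this is cleaner because, by Corollary~\ref{cor:aequalb}, the step-two $W_k$ gradient on the $a=b$ curriculum is already exactly diagonal, so after the reset $W_k^{(2)} \propto I$ with no off-diagonal residue to track. And the learning rates are fixed as $\eta_1 = O(1/t^2)$, $\eta_2 = O(1/t)$: the first is small enough that $v$ (whose gradient is only $O(t)$) does not move at all in step one, while $W_k$ (gradient $O(t^2)$) still picks up an $O(1)$ diagonal bias that then drives the step-two $v$ update. Without this scaling, your step-one quadratic tilt in $v$ would be order $t$ rather than negligible, and you would have to control it explicitly rather than hand-wave that the curriculum neutralizes it.
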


The idea of the proof is that a first step of gradient descent with a small learning rate can align the 2nd layer, while a second step on ``hard'' examples can learn to identify perfectly the relevant relative embedding. The formal statement and its proof can be found in Appendix \ref{sec:proofs}. An interesting by-product of the analysis is that the learning rate at the first step needs to be an order of magnitude larger than at the second step, perhaps explaining why the second phase of learning in the experiments takes significantly more time. It is worth noting that, while this is a simplified setting, it goes beyond NTK-based \citep{JHG18} analyses where the representations do not change much and it crucially involves more than one step which has been a standard tool in the analysis of feature learning in deep learning \citep{Ba+22}.

\subsection{Beyond Bigrams: \texorpdfstring{$n$}{n}-gram Statistics}
\label{subsec:ngrams}
We also investigated the performance of transformers on learning in-context $n$-grams for $n > 2$; in particular, 3-grams. We trained attention-only transformers with three heads in each layer by minimizing the in-context cross entropy loss with the Adam optimizer. As can be seen in Figure~\ref{fig:tri} (left), the model eventually converges to the Bayes optimal solution. Interestingly, as in the case of Markov Chains, the model displays a ``hierarchical learning" behavior characterized by long plateaus and sudden drops. In this setup, the different strategies correspond to unigrams, bigrams and trigrams, respectively. This is presented clearly on the right of Figure~\ref{fig:tri}, where we plot the similarity of the model with the different strategies and it exhibits the same clear pattern as in the case of $n=2$. 
Single attention headed models could not achieve better performance than bigrams. We leave a more thorough investigation for future work. 

\section{Conclusion and Discussion}
In this work, we have introduced a simple learning problem which serves as a controlled setting for understanding in-context learning and the emergence of (statistical) induction heads. Induction heads are a common motif in LLMs, which suggests the natural direction of exploring to what extent our various findings extend to natural language training data. Notably, on the way to solving the ICL-MC task, networks pass through a sequence of well-characterized discrete incomplete solutions. Simple but incomplete solutions may be commonplace in language modeling and other rich learning settings; for any such solution, one can ask to what extent its presence speeds up or slows down the formation of more complex circuits with higher accuracy.

\paragraph{Acknowledgments.} BE acknowledges funding from the ONR under award N00014-22-1-2377 and the NSF under award IIS 2229881. NT acknowledges support through the National Science Foundation under NSF Award 1922658. NT would like to thank Boaz Barak, Cengiz Pehlevan and the whole ML Foundations Group at Harvard for their hospitality during Fall 2023 when most of this work was done.

\bibliography{paper}
\bibliographystyle{apalike}

\renewcommand{\theequation}{\thesection.\arabic{equation}}

\newpage
\appendix
\section{Proofs}\label{sec:proofs}
In this section, we present our theoretical results on in-context learning Markov Chains with the minimal model of Section \ref{ssec:models}.

\paragraph{Setup and notation} Our data consists of sequences of length $t$, $\bm{x} = \left( x_1, \ldots, x_t \right)$, drawn from a Markov Chain with state space $S = \left\{ 1, \ldots, k\right\}$ (i.e., $x_{j} \in \{1, \ldots, k\}$ for all $j \in [t]$), and a random transition matrix $\mathcal{P}$. Each row of the matrix is sampled from a Dirichlet distribution with concentration parameter $\bm{\alpha}$, i.e. $\mathcal{P}_{i, :} \sim \mathrm{Dir}(\bm{\alpha})$. Unless stated otherwise, we set $\bm\alpha = (1, \ldots, 1)^\top$, corresponding drawing the row from a uniform distribution over the simplex. Let $e_{x_p} \in \{0, 1\}^k$ denote the one-hot embedding of the state at position $p \in [t]$ and let $e \in \mathbb{R}^{t \times k}$ be the embedding matrix. We assume there are 2 states in the Markov Chain, i.e. $k = 2$. In that case, the transition matrix can be parameterized as:
\begin{equation}
    \mathcal{P} = \begin{pmatrix}
        a & 1-a \\ 1-b & b
    \end{pmatrix},
\end{equation}
where $a, b \sim Unif(0,1)$.

\paragraph{Model}
We define our model as a simplified sequence to sequence linear transformer $f: \mathbb{R}^{t \times k} \to \mathbb{R}^{t \times k}$ with $f(e) = \text{mask}\left(e W_k (M e)^T\right) e$. It is $W_k \in \mathbb{R}^{k \times k}$ and $M  = \begin{pmatrix} v_1 & 0 & \ldots & 0 \\ v_2 & v_1 & \ldots & 0 \\ \vdots & \vdots & \cdots & \vdots \\ v_t & v_{t-1} & \ldots & v_1 \end{pmatrix}$, where $v = [v_1, v_2, \ldots, v_t] \in \mathbb{R}^{t}$. Equivalently, we can express the $i$-th logit for the $p$-th position as:
\begin{equation}\label{eq:model}
    f(e)_{p, i} = \sum_{t^\prime = 1}^p \mathds{1} \left\{x_{t^\prime} = i\right\} \sum_{s = 1}^{t^\prime} v_{t^\prime - s + 1} e^\top_{x_p} W_k e_{x_s}.
\end{equation}
The model can represent the unigrams and bigrams solutions as following:
\begin{itemize}
    \item Construction for bigrams: $v = (0, 1, 0, \ldots, 0)^\top$ and $W_k = I_{k \times k}$, then $f(e)_{p, i} = \sum_{t^\prime = 2}^{p} \mathds{1} \left\{x_{t^\prime} = i\right\} \mathds{1} \left\{x_{t^\prime - 1} = x_p\right\}$.
    \item Construction for unigrams: $v = (1, 0, 0, \ldots, 0)^\top$ and $W_k = 11^T$ (all ones), then $f(e)_{p, i} = \sum_{t^\prime = 1}^{p} \mathds{1} \left\{x_{t^\prime} = i\right\}$.
\end{itemize}

\paragraph{Training}

We analyze stochastic gradient descent training with the margin loss $l_M(f(e)_{p, :}, x_{p+1}) = \frac{1}{k} \sum_{i = 1, i \neq x_{p+1}}^k \max \left\{ 0, \Delta + f(e)_{p, i} - f(e)_{p, x_{p+1}} \right\}$. The total loss (sum of the losses across all positions) is given by
\begin{equation}
    L(f(e), e) = \left[ \frac{1}{t} \sum_{p = 1}^t l_M(f(e)_{p, :}, x_{p+1})\right],
\end{equation}
and the population loss:
\begin{equation}\label{eq:pop_total_loss}
    \mathcal{L} = \underset{\substack{\bm{x} \sim \mathcal{P}\\ \mathcal{P} \sim \mathrm{Dir}(\bm{\alpha})^{\otimes k}}}{\mathbb{E}} L (f(e), e),
\end{equation}
where recall $e$ depends on $\bm{x}$.

We next compute gradients of the loss for the first two steps.

\begin{lemma}
    Let the model defined as in eq.~\eqref{eq:model} and initialized with $W_k = c 11^T, v = c 1^T$. Then, after one step of gradient descent on the population loss~\eqref{eq:pop_total_loss} we have:
    \begin{equation}
        \begin{split}
            W_k^{(1)} & = \begin{pmatrix}c & c \\ c & c\end{pmatrix} + c \eta \left[ O(t^2) \begin{pmatrix}B & A \\ A & B\end{pmatrix}  + O(t) \right] \\
            v_j^{(1)} & = c + \frac{c \eta}{t} \left[ \frac{(t - j + 1)(t-j+2)}{2} D  + O(t) \right], j \in [t],
        \end{split} 
    \end{equation}
    where $A, B, D > 0$ with $B \approx 4A$ (diagonal bias) and $\eta$ is the learning rate. After the second step, $v_2^{(2)}$ becomes dominant, i.e. $v_2^{(2)} > v_j^{(2)}, j = 1, 3, 4, \ldots, t$.
\end{lemma}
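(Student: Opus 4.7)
The plan is to use the simple structure at initialization. With $W_k=c\,11^\top$ and $v=c\,1^\top$, every logit collapses to $f(e)_{p,i}=c^{2}\sum_{t'\leq p}t'\,\mathds{1}\{x_{t'}=i\}$, so for $c$ small compared to the margin $\Delta$ the hinge in the margin loss is active with probability one, and the subgradient at each position reduces to the linear piece $\tfrac{1}{k}\bigl(\partial_\theta f_{p,\bar{x}_{p+1}}-\partial_\theta f_{p,x_{p+1}}\bigr)$. The first-step analysis then becomes a computation of expected indicator polynomials under the joint distribution of the chain $\bm{x}$ and the Dirichlet-random transition matrix $\mathcal{P}$.

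For $W_k$, differentiating gives
\[
\partial_{W_k[i,j]} f(e)_{p,m}=c\,\mathds{1}\{x_p=i\}\sum_{t'=1}^{p}\mathds{1}\{x_{t'}=m\}\,N_j(t'),
\]
where $N_j(t')$ counts occurrences of token $j$ in $x_1,\dots,x_{t'}$. Plugging this into the margin subgradient and taking expectation expresses the step-one gradient as a linear combination of four-point joint probabilities of the chain. For $k=2$ these reduce to polynomial integrals in $(a,b)$ against the uniform density on $[0,1]^{2}$ and yield the constants $A,B>0$ with $B\approx 4A$. The $O(t^{2})$ scaling arises from the double sum over $(p,t')$ together with the $O(t')$ growth of $N_j(t')$, while the $O(t)$ remainder collects boundary corrections from the non-stationary leading and trailing segments.

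For $v_j$, the initialization gives $\partial_{v_j} f(e)_{p,m}=c\sum_{t'=j}^{p}\mathds{1}\{x_{t'}=m\}$, so the expected contribution to $\nabla_{v_j}\mathcal{L}$ is supported on pairs $(p,t')$ with $j\le t'\le p\le t$, whose cardinality is exactly $(t-j+1)(t-j+2)/2$ and produces the triangular prefactor. The coefficient $D>0$ is a distance-averaged version of $\mathbb{E}[\mathds{1}\{x_{t'}=x_{p+1}\}-\mathds{1}\{x_{t'}=\bar{x}_{p+1}\}]$, strictly positive because tokens on the same chain are positively correlated in expectation over $\mathcal{P}\sim\mathrm{Dir}(1,1)^{\otimes 2}$; the $O(t)$ remainder absorbs the geometric decay in $p+1-t'$ and the stationary-versus-uniform discrepancy. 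Thus at step one it is $v_1$ that receives the largest push, with the $j$-th push shrinking quadratically.

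The step-two analysis is the main obstacle. After substituting $W_k^{(1)}$ back into $f$, the scalar $e_{x_p}^\top W_k^{(1)} e_{x_{t'-j+1}}$ carries a diagonal-biased piece of the form $c\eta(B-A)\,\mathds{1}\{x_p=x_{t'-j+1}\}$ on top of a $j$-independent background. Hence the step-two gradient of $v_j$ acquires a new signal proportional to $\mathbb{E}[\mathds{1}\{x_p=x_{t'-j+1}\}\,\mathds{1}\{x_{t'}=x_{p+1}\}]$ minus its swapped counterpart; averaging over $\mathcal{P}$ and expanding $\mathcal{P}^{j-1}$ in its eigen-decomposition pushes the leading term through the non-trivial eigenvalue $\lambda=a+b-1$ raised to an exponent that equals $j$ in total. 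Since $a,b\sim U[0,1]$ make $\lambda$ symmetric about $0$, $\mathbb{E}[\lambda^{j}]$ vanishes for odd $j$, is maximal at $j=2$, and decays geometrically through the even indices, so $v_2$ is the unique coordinate whose step-two boost both survives without cancellation and has largest magnitude. The proof is then completed by constants-bookkeeping: if $\eta(B-A)$ is large enough relative to the step-one triangular advantage at $v_1$ (and the $O(t)$ remainders are controlled uniformly in $j$), the $j=2$ increment at step two strictly exceeds the $v_1$ lead carried over from step one, giving $v_2^{(2)}>v_j^{(2)}$ for all $j\ne 2$.
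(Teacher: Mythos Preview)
Your overall route matches the paper's: exploit the always-active hinge at initialization, reduce the step-one gradients to joint-probability sums, condition on $\mathcal{P}$ and diagonalize it, then feed the resulting diagonal bias of $W_k^{(1)}$ back into the step-two gradient of $v$. Your treatment of step one (the $O(t^2)$ scaling for $W_k$, the triangular count $(t-j+1)(t-j+2)/2$ for $v_j$, and the positivity of $D$) is correct in outline and agrees with the paper.

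The step-two argument, however, has a real gap. The new signal that the diagonal bias injects into $\nabla_{v_j}\mathcal{L}$ is not governed by the bare moment $\mathbb{E}[\lambda^{j}]$ but by a weighted expectation of the form $\mathbb{E}_{a,b}\bigl[g(a,b)\,\lambda^{j-1}\bigr]$, where $g$ collects factors such as $\pi_l(\mathcal{P}_{li}-\tfrac12)\gamma_{il}\beta_{li}/(\lambda-1)^2$. This weight is \emph{not} invariant under the reflection $(a,b)\mapsto(1-b,1-a)$ that sends $\lambda\mapsto-\lambda$, so your symmetry argument (``$\mathbb{E}[\lambda^{j}]=0$ for odd $j$'') does not apply; the paper in fact finds these weighted moments are \emph{negative} for odd $j$ and positive, decreasing for even $j$. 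Moreover the exponent that appears is $j-1$, not $j$: under your own heuristic, $j=2$ would carry $\lambda^{1}$ and hence ``vanish,'' which is the wrong conclusion. The even/odd pattern is real, but the mechanism you invoke is not the one that produces it.

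Your handling of $v_2^{(2)}>v_1^{(2)}$ is also off. You impose an unstated side condition (``$\eta(B-A)$ large enough'') to let the step-two boost beat the step-one triangular lead of $v_1$. The paper does not need this: it observes that the step-one contribution plus the $c'$-part of step two give the \emph{same} leading $O(t)$ term for $j=1$ and $j=2$, so the comparison reduces purely to the diagonal-bias contributions. Crucially, $j=1$ is a degenerate case (then $x_{t'-j+1}=x_{t'}$, so the indicator $\mathds{1}\{x_p=x_{t'-j+1}\}$ collapses differently) and must be computed separately rather than read off from the $\lambda^{j-1}$ formula; the conclusion then follows from comparing two explicit integrals, valid as $t\to\infty$ for any fixed $\eta$. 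You should treat $j=1$ as its own case and carry out the integral comparison rather than appeal to parity or tune $\eta$.
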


\begin{proof}
\textbf{First step}. We analyze the first step of stochastic gradient descent. The function at initialization is
\begin{equation}
    \begin{split}
        f^0(e)_{p, i} & = c^2 \sum_{t^\prime = 1}^p \mathds{1} \left\{x_{t^\prime} = i\right\} \sum_{s = 1}^{t^\prime} e^\top_{x_p} 11^T e_{x_s} \\
        & = c^2 \sum_{t^\prime = 1}^p \mathds{1} \left\{x_{t^\prime} = i\right\} t^\prime \; \; \in [0, c^2 \frac{p(p+1)}{2}].
    \end{split}
\end{equation}
By choosing $\Delta \geq c^2 \frac{t(t+1)}{2}$, we ensure that $\Delta + f^0(e)_{p, i} - f^0(e)_{p, x_{p+1}} \geq 0$, for all $p \in [t]$. From the total law of expectation it is:
\begin{equation}
    \underset{\substack{\bm{x} \sim \mathcal{P}\\ \mathcal{P} \sim \mathrm{Dir}(\bm{\alpha})^{\otimes k}}}{\mathbb{E}} \left[ L \right] = \mathbb{E}_{\Pm \sim \mathrm{Dir}(\bm{\alpha})^{\otimes k}} \left[ \mathbb{E}_{\bm{x} \sim \Pm} \left[L \vert \Pm \right] \right].
\end{equation}
We will first focus on the inner conditional expectation and at the end of the calculations we will take expectation with respect the transition matrix. In what follows, unless otherwise stated, $\mathbb{E}$ and $\mathbb{P}$ will be with respect to the randomness of $\bm{x}$ conditioned on $\Pm$.

Then the gradient of the loss with respect to $W_k$ is:
\begin{equation}\label{eq:loss_grad_wk}
    \begin{split}
        \nabla_{W_k} \mathbb{E}_{\bm{x} \sim \Pm} \left[ L \vert \Pm \right] & = \frac{1}{t} \sum_{p=1}^t \frac{1}{k} \sum_{i = 1, i \neq x_{p+1}}^k \mathbb{E} \left[ \mathds{1} \left\{ \Delta + f(e)_{p, i} - f(e)_{p, x_{p+1}} \geq 0 \right\} \left( \nabla_{W_k} f(e)_{p, i} - \nabla_{W_k} f(e)_{p, x_{p+1}} \right) \right] \\
        & = \frac{1}{t} \sum_{p=1}^t \frac{1}{k} \sum_{i = 1, i \neq x_{p+1}}^k \mathbb{E} \left[ \left( \nabla_{W_k} f(e)_{p, i} - \nabla_{W_k} f(e)_{p, x_{p+1}} \right) \right]\\
        & = \frac{1}{t} \sum_{p=1}^t \frac{1}{k} \sum_{i = 1}^k \mathbb{E} \left[ \left( \nabla_{W_k} f(e)_{p, i} - \nabla_{W_k} f(e)_{p, x_{p+1}} \right) \right]\\
        & = \frac{1}{t} \sum_{p=1}^{t} \left[ \left( \frac{1}{k} \sum_{i = 1}^k \mathbb{E} \left[ \nabla_{W_k} f(e)_{p, i} \right] \right) - \mathbb{E} \left[ \nabla_{W_k} f(e)_{p, x_{p+1}} \right] \right].
    \end{split}
\end{equation}
From equation~\eqref{eq:model}, we have for the gradient of logit $i$:
\begin{equation}
    \nabla_{W_k} f(e)_{p, i} = c \sum_{t^\prime = 1}^p \mathds{1}\left\{ x_{t^\prime} = i \right\} \sum_{s=1}^{t^\prime} e_{x_p} e_{x_s}^T,
\end{equation}
or, equivalently, its elements are:
\begin{equation}
    \left(\nabla_{W_k} f(e)_{p, i}\right)_{m, l} = c \sum_{t^\prime = 1}^p \mathds{1}\left\{ x_{t^\prime} = i \right\} \sum_{s=1}^{t^\prime} \mathds{1} \{ x_p = m \} \mathds{1} \{ x_s = l \},
\end{equation}
and their expectation (with respect to $\bm{x}$ conditioned on $\Pm$) is:
\begin{equation}
    \mathbb{E} \left[\left(\nabla_{W_k} f(e)_{p, i}\right)_{m, l} \right] = c \sum_{t^\prime = 1}^p \sum_{s=1}^{t^\prime} \mathbb{P} \left[ x_s = l, x_{t^\prime} = i, x_p = m \right].
\end{equation}
Similarly, for the ground truth logit:
\begin{equation}
    \mathbb{E} \left[\left(\nabla_{W_k} f(e)_{p, x_{p+1}}\right)_{m, l} \right] = c \sum_{t^\prime = 1}^p \sum_{s=1}^{t^\prime} \mathbb{P} \left[ x_s = l, x_{t^\prime} = x_{p+1}, x_p = m \right].
\end{equation}
Thus, by substituting back to eq.~\eqref{eq:loss_grad_wk} we have : 
\begin{equation}\label{eq:gd_W_1}
    \begin{split}
        - \nabla_{W_k} \mathbb{E}_{\bm{x} \sim \Pm} \left[ L \vert \Pm \right]
        & = \frac{c}{t} \sum_{p=1}^t \sum_{t^\prime = 1}^p \sum_{s=1}^{t^\prime} \mathbb{P} \left[ x_s = l, x_{t^\prime} = x_{p+1}, x_p = m \right] - \frac{c}{t k} \sum_{i = 1}^k \sum_{p=1}^t \sum_{t^\prime = 1}^p \sum_{s=1}^{t^\prime} \mathbb{P} \left[ x_s = l, x_{t^\prime} = i, x_p = m \right] \\
        & = \frac{c}{t} \sum_{i=1}^k \sum_{p=1}^t \sum_{t^\prime = 1}^p \sum_{s=1}^{t^\prime} \mathbb{P} \left[x_{p+1} = i, x_s = l, x_{t^\prime} = i, x_p = m \right] - \frac{c}{t k} \sum_{i = 1}^k \sum_{p=1}^t \sum_{t^\prime = 1}^p \sum_{s=1}^{t^\prime} \mathbb{P} \left[ x_s = l, x_{t^\prime} = i, x_p = m \right] \\
        & = \frac{c}{t} \sum_{i=1}^k \left( \Pm_{mi} - \frac{1}{k} \right) \sum_{p=1}^t \sum_{t^\prime = 1}^p \sum_{s=1}^{t^\prime} \mathbb{P} \left[x_s = l, x_{t^\prime} = i, x_p = m \right] \\ & = \frac{c}{t} \sum_{i=1}^k \left( \Pm_{mi} - \frac{1}{k} \right) \sum_{p=1}^t \sum_{t^\prime = 1}^p \sum_{s=1}^{t^\prime} \mathbb{P} \left[ x_p = m \mid x_{t^\prime} = i \right] \mathbb{P} \left[ x_{t^\prime} = i \mid x_s = l \right] \mathbb{P} \left[ x_s = l \right]\\
        & = \frac{c}{t} \sum_{i=1}^k \left( \Pm_{mi} - \frac{1}{k} \right) \sum_{p=1}^t \sum_{t^\prime = 1}^p \sum_{s=1}^{t^\prime} \left(\mathcal{P}^{p - t^\prime}\right)_{im} \left(\mathcal{P}^{t^\prime - s}\right)_{li} \pi_l,
    \end{split}
\end{equation}
where we used the Markov property and the assumption that the chain has reached its stationary distribution $\bm{\pi} \in \mathbb{R}^{1 \times k}$, so $\mathbb{P} \left[ x_{t} = i \right] = \pi_i$ for all $t > 0, i \in [k]$. Recall that, in the case of $k = 2$, the (random) transition matrix can be parameterized as:
\begin{equation}
    \mathcal{P} =
    \begin{pmatrix}
        a & 1-a \\ 1-b & b
    \end{pmatrix},
\end{equation}
where $a, b  \sim Unif([0, 1])$. $\Pm$ is almost surely diagonalizable, and using its eigendecomposition, we can calculate its $n$-th powers
\begin{equation}
    \begin{split}
        \mathcal{P}^n & = \frac{1}{a+b-2} \begin{pmatrix}
            1 & \frac{1 - a}{b-1} \\
            1 & 1
        \end{pmatrix} \begin{pmatrix}
            1 & 0 \\
            0 & a + b -1
        \end{pmatrix}^n \begin{pmatrix}
            b-1 & a-1 \\
            1-b & b-1
        \end{pmatrix} \\
        & = \frac{1}{a + b - 2}
        \begin{pmatrix}
            (a-1) (a + b - 1)^n + b - 1 & (1-a) (a + b - 1)^n + a - 1\\
            (1-b) (a + b - 1)^n + b - 1 & (b-1) (a + b - 1)^n + a - 1
        \end{pmatrix}.
        \end{split}
\end{equation}
We observe that every element of the matrix, $\left(\mathcal{P}^n\right)_{ij}$, is of the form $\frac{\beta_{ij} \lambda^n  + \gamma_{ij}}{\lambda - 1}$, with $\lambda = a + b - 1$. It is also $\bm{\pi} = \frac{1}{\lambda - 1} \begin{pmatrix}
    b - 1 & a - 1
\end{pmatrix}$ (the normalized eigenvector that corresponds to $\lambda$). Thus, eq.~\eqref{eq:gd_W_1} can be written as
\begin{equation}\label{eq:gd_W_1_k2}
    \begin{split}
        - \nabla_{W_k} \mathbb{E}_{\bm{x} \sim \Pm} \left[ L \vert \Pm \right] & = \frac{c}{t(\lambda - 1)^2} \sum_{i = 1}^2 \left( \Pm_{mi} - \frac{1}{2} \right) \sum_{p=1}^t \sum_{t^\prime = 1}^p \sum_{s=1}^{t^\prime} \left(\beta_{im} \lambda^{p-t^\prime}  + \gamma_{im}\right) \left(\beta_{li} \lambda^{t^\prime -s}  + \gamma_{li}\right) \pi_l \\
        & = \frac{c \pi_l}{t(\lambda - 1)^2} \sum_{i = 1}^2 \left( \Pm_{mi} - \frac{1}{2} \right) \sum_{p=1}^t \sum_{t^\prime = 1}^p \sum_{s=1}^{t^\prime} \left( \underbrace{\beta_{im} \beta_{li} \lambda^{p-s}}_{(A)}  + \underbrace{\beta_{im} \gamma_{li} \lambda^{p - t^\prime}}_{(B)} + \underbrace{\beta_{li} \gamma_{im} \lambda^{t^\prime - s}}_{(C)}  + \underbrace{\gamma_{im} \gamma_{li}}_{(D)}\right).
    \end{split}
\end{equation}

We calculate the four terms separately:
\begin{equation*}
    \begin{split}
        (A) & = \sum_{p=1}^t \sum_{t^\prime = 1}^p \sum_{s=1}^{t^\prime} \beta_{im} \beta_{li} \lambda^{p-s} \\
        & = \beta_{im} \beta_{li} \sum_{p=1}^t \sum_{t^\prime = 1}^p \sum_{s=1}^{t^\prime} \lambda^{p-s} \\
        & = \beta_{im} \beta_{li} \sum_{p=1}^t \sum_{t^\prime = 1}^p \frac{\lambda^{p-1} - \lambda^{p-t^\prime - 1}}{1 - \lambda^{-1}} \\
        & = \frac{\beta_{im} \beta_{li}}{1 - \lambda^{-1}} \sum_{p=1}^t \left( p \lambda^{p-1} -  \frac{\lambda^{p-2} - \lambda^{-2}}{1 - \lambda^{-1}} \right) \\
        & = \frac{\beta_{im} \beta_{li}}{1 - \lambda^{-1}} \left( \frac{1 - (t+1) \lambda^t + t \lambda^{t+1}}{(1 - \lambda)^2} - \frac{\lambda^{-1} - \lambda^{t - 1}}{\left( 1 - \lambda^{-1} \right)(1 - \lambda)} + t \frac{\lambda^{-2}}{1 - \lambda^{-1}} \right) \\
         & = \frac{\beta_{im} \beta_{li}}{\lambda - 1} \left( \frac{\lambda - (t+1) \lambda^{t+1} + t \lambda^{t+2}}{(1 - \lambda)^2} - \frac{1 - \lambda^{t }}{\left( 1 - \lambda^{-1} \right)(1 - \lambda)} + t \frac{1}{\lambda - 1} \right). \\
         (B) & = \sum_{p=1}^t \sum_{t^\prime = 1}^p \sum_{s=1}^{t^\prime} \beta_{im} \gamma_{li} \lambda^{p-t^\prime} \\
        & = \beta_{im} \gamma_{li} \sum_{p=1}^t \sum_{t^\prime = 1}^p \sum_{s=1}^{t^\prime} \lambda^{p-t^\prime} \\
        & = \beta_{im} \gamma_{li} \sum_{p=1}^t \sum_{t^\prime = 1}^p t^\prime \lambda^{p-t^\prime} \\
        & = \beta_{im} \gamma_{li} \sum_{p=1}^t \lambda^{p-1} \frac{1 - (p+1) \lambda^{-p} + p \lambda^{-p - 1}}{(1 - \lambda^{-1})^2} \\
        & = \beta_{im} \gamma_{li} \sum_{p=1}^t \frac{\lambda^{p-1} - (p+1) \lambda^{-1} + p \lambda^{-2}}{(1 - \lambda^{-1})^2} \\
        & = \frac{\beta_{im} \gamma_{li}}{(1 - \lambda^{-1})^2} \left( \frac{1 - \lambda^t}{1 - \lambda} - \lambda^{-1} \frac{t(t+3)}{2} + \lambda^{-2} \frac{t(t+1)}{2} \right) \\
        & = \frac{\beta_{im} \gamma_{li}}{(\lambda - 1)^2} \left( \lambda^2 \frac{1 - \lambda^t}{1 - \lambda} - \lambda \frac{t(t+3)}{2} + \frac{t(t+1)}{2} \right). \\
        (C) & = \sum_{p=1}^t \sum_{t^\prime = 1}^p \sum_{s=1}^{t^\prime} \beta_{li} \gamma_{im} \lambda^{t^\prime-s} \\
        & = \beta_{li} \gamma_{im} \sum_{p=1}^t \sum_{t^\prime = 1}^p \sum_{s=1}^{t^\prime} \lambda^{t^\prime-s} \\
        & = \beta_{li} \gamma_{im} \sum_{p=1}^t \sum_{t^\prime = 1}^p \frac{\lambda^{t^\prime-1} - \lambda^{-1}}{1 - \lambda^{-1}} \\
        & = \frac{\beta_{li} \gamma_{im}}{1 - \lambda^{-1}} \sum_{p=1}^t \left( \frac{1 - \lambda^{p}}{1 - \lambda} - \lambda^{-1}p \right) \\
        & = \frac{\beta_{li} \gamma_{im}}{1 - \lambda^{-1}} \left( \frac{t}{1 - \lambda} - \frac{\lambda - \lambda^{t+1}}{(1 - \lambda)^2} - \lambda^{-1} \frac{t(t+1)}{2} \right) \\
        & = \frac{\beta_{li} \gamma_{im}}{\lambda - 1} \left( t \frac{\lambda}{1 - \lambda} - \frac{\lambda^2 - \lambda^{t+2}}{(1 - \lambda)^2} - \frac{t(t+1)}{2} \right)
    \end{split}
\end{equation*}
\begin{equation*}
    \begin{split}
        (D) & = \sum_{p=1}^t \sum_{t^\prime = 1}^p \sum_{s=1}^{t^\prime} \gamma_{im} \gamma_{li} = \gamma_{im} \gamma_{li} \frac{t(t+1)(t+2)}{6}.
    \end{split}
\end{equation*}
Plugging these expressions back into eq.~\eqref{eq:gd_W_1_k2}, we get:
\begin{equation}\label{eq:Wk_step1}
    \begin{split}
        - \nabla_{W_k} \mathbb{E}_{\bm{x} \sim \Pm} \left[ L \vert \Pm \right] = \frac{c \pi_l \left( \mathcal{P}_{m1} - \frac{1}{2} \right)}{t(\lambda-1)^2} \bigg[ & \frac{\beta_{1m} \beta_{l1} - \beta_{2m} \beta_{l2} }{\lambda - 1} \underbrace{\left( \frac{\lambda - (t+1) \lambda^{t+1} + t \lambda^{t+2}}{(1 - \lambda)^2} - \frac{1 - \lambda^{t }}{\left( 1 - \lambda^{-1} \right)(1 - \lambda)} + t \frac{1}{\lambda - 1} \right)}_{(I)} \\
        & + \frac{\beta_{1m} \gamma_{l1} - \beta_{2m} \gamma_{l2}}{(\lambda - 1)^2} \underbrace{\left( \lambda^2 \frac{1 - \lambda^t}{1 - \lambda} - \lambda \frac{t(t+3)}{2} + \frac{t(t+1)}{2} \right)}_{(II)} \\
        & + \frac{\beta_{l1} \gamma_{1m} - \beta_{l2} \gamma_{2m}}{\lambda - 1} \underbrace{\left( t \frac{\lambda}{1 - \lambda} - \frac{\lambda^2 - \lambda^{t+2}}{(1 - \lambda)^2} - \frac{t(t+1)}{2} \right)}_{(III)} \\
        & + \left(\gamma_{1m} \gamma_{l1} - \gamma_{2m} \gamma_{l2} \right) \underbrace{\frac{t(t+1)(t+2)}{6}}_{(IV)} \bigg].
    \end{split}
\end{equation}
Finally, by taking expectation over $a, b$ (the randomness of the transition matrix), we get for the update of $W_k$ with learning rate $\eta$:
\begin{equation}
    \left(W_{k}^{(1)}\right)_{m, l} = \left(W_{k}^{(0)}\right)_{m, l} - \eta \nabla_{W_k} \mathbb{E}_{a, b} \left[ \mathbb{E}_{\bm{x} \sim \Pm} \left[ L \vert \Pm \right] \right].
\end{equation}
We consider the 4 cases separately.
Case $m, l = 1, 1$:
\begin{equation}\label{eq:Wk_11}
\begin{aligned}
    \left(W_{k}^{(1)}\right)_{1, 1} = c + \frac{\eta c}{t} \mathbb{E}_{a, b} \left[ \frac{(b-1)(a-\frac{1}{2})}{(\lambda-1)^3} \left[ \frac{(a-1)(a-b)}{\lambda - 1} (I) + \frac{2(a-1)(b-1)}{(\lambda-1)^2} (II) \right.\right.\\
    \left.\left.+ \frac{2(a-1)(b-1)}{\lambda - 1} (III) + (b-1)(b-a) (IV) \right] \right]
\end{aligned}
\end{equation}
Case $m, l = 1, 2$:
\begin{equation}\label{eq:Wk_12}
\begin{aligned}
    \left(W_{k}^{(1)}\right)_{1, 2} = c + \frac{\eta c}{t} \mathbb{E}_{a, b} \left[ \frac{(a-1)(a-\frac{1}{2})}{(\lambda-1)^3} \left[ \frac{(1-b)(a-b)}{\lambda - 1} (I) + \frac{2(a-1)(b-1)}{(\lambda-1)^2} (II) \right.\right.\\
    \left.\left.- \frac{2(b-1)^2}{\lambda - 1} (III) + (b-1)(b-a) (IV) \right] \right]
\end{aligned}
\end{equation}
Case $m, l = 2, 1$:
\begin{equation}
\begin{aligned}
    \left(W_{k}^{(1)}\right)_{2, 1} = c + \frac{\eta c}{t} \mathbb{E}_{a, b} \left[ \frac{(b-1)(\frac{1}{2} - b)}{(\lambda-1)^3} \left[ \frac{(1-a)(a-b)}{\lambda - 1} (I) - \frac{2(a-1)(b-1)}{(\lambda-1)^2} (II) \right.\right.\\
    \left.\left.+ \frac{2(a-1)^2}{\lambda - 1} (III) + (a-1)(b-a) (IV) \right] \right]
\end{aligned}
\end{equation}
Case $m, l = 2, 2$:
\begin{equation}
\begin{aligned}
    \left(W_{k}^{(1)}\right)_{2, 2} = c + \frac{\eta c}{t} \mathbb{E}_{a, b} \left[ \frac{(a-1)(\frac{1}{2} - b)}{(\lambda-1)^3} \left[ \frac{(b-1)(a-b)}{\lambda - 1} (I) - \frac{2(a-1)(b-1)}{(\lambda-1)^2} (II) \right.\right.\\
    \left.\left.- \frac{2(a-1)(b-1)}{\lambda - 1} (III) + (a-1)(b-a) (IV) \right] \right]
\end{aligned}
\end{equation}
See Figure \ref{fig:grad_calc_step1} (left) for an empirical estimation for some choice of hyperparameters. We observe that the gradient is symmetric, so it is going to be $\left(W_{k}^{(1)}\right)_{2, 1} = \left(W_{k}^{(1)}\right)_{1, 2}$, and the value in the diagonal is the same, i.e. $\left(W_{k}^{(1)}\right)_{1, 1} = \left(W_{k}^{(1)}\right)_{2, 2}$.

By only focusing on the leading order terms, $(IV)$, we have:
\begin{equation}\label{eq:Wk_1st_step}
    \left(W_{k}^{(1)}\right)_{m, l} = c + c \eta  \mathbb{E}_{a, b} \left[ \frac{\pi_m \pi_l}{(\lambda - 1)} (b-a) \left( \mathcal{P}_{m1} - \frac{1}{2} \right) \right] \frac{(t+1)(t+2)}{6} + O(t).
\end{equation}
A calculation then shows that
\begin{equation}
    \begin{split}
        \mathbb{E}_{a, b} \left[ \frac{\pi_1 \pi_1}{(\lambda - 1)} (b-a) \left( \mathcal{P}_{11} - \frac{1}{2} \right) \right] & = \mathbb{E}_{a, b} \left[ \frac{\pi_2 \pi_2}{(\lambda - 1)} (b-a) \left( \mathcal{P}_{21} - \frac{1}{2} \right) \right] \\ & = \int_{0}^{1} \int_{0}^{1} \frac{(b-1)^2(b-a)(a - \frac{1}{2})}{(a+b-2)^3} da db = \frac{\ln 256 - 5}{12} \approx 0.045,
    \end{split}
\end{equation}
and
\begin{equation}
    \begin{split}
        \mathbb{E}_{a, b} \left[ \frac{\pi_1 \pi_2}{(\lambda - 1)} (b-a) \left( \mathcal{P}_{11} - \frac{1}{2} \right) \right] & = \mathbb{E}_{a, b} \left[ \frac{\pi_1 \pi_2}{(\lambda - 1)} (b-a) \left( \mathcal{P}_{21} - \frac{1}{2} \right) \right] \\& = \int_{0}^{1} \int_{0}^{1} \frac{(b-1)(a-1)(b-a)(a - \frac{1}{2})}{(a+b-2)^3} da db = \frac{7 - 10 \ln 2}{6} \approx 0.011,
    \end{split}
\end{equation}
hence the diagonal grows a constant ($\approx 4$) times more than the off-diagonal. 

Similarly, for the expected gradient of $v$ (with respect to $\bm{x}$ conditioned on $\Pm$), we have:
\begin{equation}
    \begin{split}
        \mathbb{E} \left[\frac{\partial f(e)_{p, i}}{\partial v_j} \right] & = c \mathbb{E} \left[ \sum_{t^\prime = 1}^p \mathds{1}\left\{ x_{t^\prime} = i \right\} \sum_{s=1}^{t^\prime} \delta_{j(t^\prime - s +1)} \mathds{1}\left\{ j \leq p \right\} \right] \\
        & = c \mathbb{E} \left[ \sum_{t^\prime=1}^p \mathds{1}\left\{ x_{t^\prime} = i \right\} \mathds{1}\left\{ j \leq t^\prime \right\} \mathds{1} \left\{ j \leq p \right\} \right] \\
        & = c \sum_{t^\prime = j}^p \mathbb{P} \left[ x_{t^\prime} = i \right] \mathds{1}\left\{ j \leq p \right\} \\
        & = c \pi_i \left( p - j + 1 \right) \mathds{1}\left\{ j \leq p \right\},
    \end{split}
\end{equation}
and
\begin{equation}\label{eq:v1_grad_ground}
    \begin{split}
        \mathbb{E} \left[ \frac{\partial f(e)_{p, x_{p+1}}}{\partial v_j} \right] & = c \sum_{t^\prime = j}^p \mathbb{P} \left[ x_{t^\prime} = x_{p+1} \right] \mathds{1}\left\{ j \leq p \right\} \\ 
        & = c \sum_{t^\prime = j}^p \sum_{i = 1}^k \mathbb{P} \left[ x_{t^\prime} = x_{p+1} = i \right] \mathds{1}\left\{ j \leq p \right\} \\
        & = c \sum_{i = 1}^k \sum_{t^\prime = j}^p \mathbb{P} \left[ x_{p+1} = i \mid x_{t^\prime} = i \right] \mathbb{P} \left[ x_{t^\prime} = i \right] \mathds{1}\left\{ j \leq p \right\} \\
        & = c \sum_{i = 1}^k \pi_i \sum_{t^\prime = j}^p \left( \mathcal{P}^{p+1 - t^\prime} \right)_{ii} \mathds{1}\left\{ j \leq p \right\} \\
        & = c \sum_{i = 1}^k \frac{\pi_i}{\lambda-1} \sum_{t^\prime = j}^p \left( \beta_{ii} \lambda^{p+1-t^\prime} + \gamma_{ii} \right) \mathds{1}\left\{ j \leq p \right\} \\
        & = c \left( \frac{b-1}{(\lambda-1)^2} \sum_{t^\prime = j}^p\left( \left(a-1\right) \lambda^{p+1-t^\prime} + (b-1)\right) + \frac{a-1}{(\lambda-1)^2} \sum_{t^\prime = j}^p\left( \left(b-1\right) \lambda^{p+1-t^\prime} + (a-1)\right) \right) \mathds{1}\left\{ j \leq p \right\} \\
        & = c \left( 2 \frac{(a-1)(b-1)}{(\lambda-1)^2} \frac{\lambda^{p+1-j} - 1}{1 - \lambda^{-1}} + \frac{(a-1)^2 + (b-1)^2}{(\lambda-1)^2} \left( p - j + 1 \right) \right) \mathds{1}\left\{ j \leq p \right\}.
    \end{split}
\end{equation}
Thus, the update would be:
\begin{equation}\label{eq:v_step1}
    \begin{split}
        v_j^{(1)} & = v_j^{(0)} - \eta \frac{\partial \mathcal{L}}{\partial v_j} \\
        & = v_j^{(0)} + \frac{\eta}{t} \sum_{p=1}^t \left( \mathbb{E}_{a, b} \left[ \mathbb{E}_{\bm{x} \sim \Pm} \left[ \frac{\partial f(e)_{p, x_{p+1}}}{\partial v_j} \bigg\vert \Pm \right] \right] - \frac{1}{k} \sum_{i = 1}^k \mathbb{E}_{a, b} \left[ \mathbb{E}_{\bm{x} \sim \Pm} \left[ \frac{\partial f(e)_{p, x_{i}}}{\partial v_j} \bigg\vert \Pm \right] \right] \right) \\
        & = c + \frac{c \eta}{t} \sum_{p=j}^t \mathbb{E}_{a, b} \left[ \frac{1}{(\lambda-1)^2} \left( \left( (a-1)^2 + (b-1)^2 \right) \left(p-j+1\right) + 2 (a-1)(b-1)\frac{\lambda^{p-j+1} - 1}{1 - \lambda^{-1}} \right) - \frac{1}{2} \left( p - j + 1 \right)  \right] \\
        & = c + \frac{c \eta}{t} \mathbb{E}_{a, b} \left[ \frac{(t - j + 1)(t-j+2)}{2} \left( \frac{(a-1)^2 + (b-1)^2}{(\lambda - 1)^2} - \frac{1}{2} \right) + 2 \lambda \frac{(a-1)(b-1)}{(\lambda - 1)^3} \left( \frac{\lambda - \lambda^{t-j+2}}{1 - \lambda} - t + j - 1 \right) \right].
    \end{split}
\end{equation}
See Figure \ref{fig:grad_calc_step1} (right) for an empirical estimation for some choice of hyperparameters. 
As we did in the case of the gradient of $W_k$, we focus on the leading order terms and we have:
\begin{equation}
    \begin{split}
        v_j^{(1)} & =  c + \frac{c \eta}{t} \frac{(t - j + 1)(t-j+2)}{2} \mathbb{E}_{a, b} \left[\frac{(a-1)^2 + (b-1)^2}{(\lambda - 1)^2} - \frac{1}{2} \right] + O(1),
    \end{split}
\end{equation}
and, since $\mathbb{E}_{a, b} \left[ \frac{(a-1)^2 + (b-1)^2}{(\lambda - 1)^2}\right] = 2 - \ln4 > \frac{1}{2}$, we get a positive bias with quadratic dependence.

\begin{figure}
    \centering
    \includegraphics[scale=0.5]{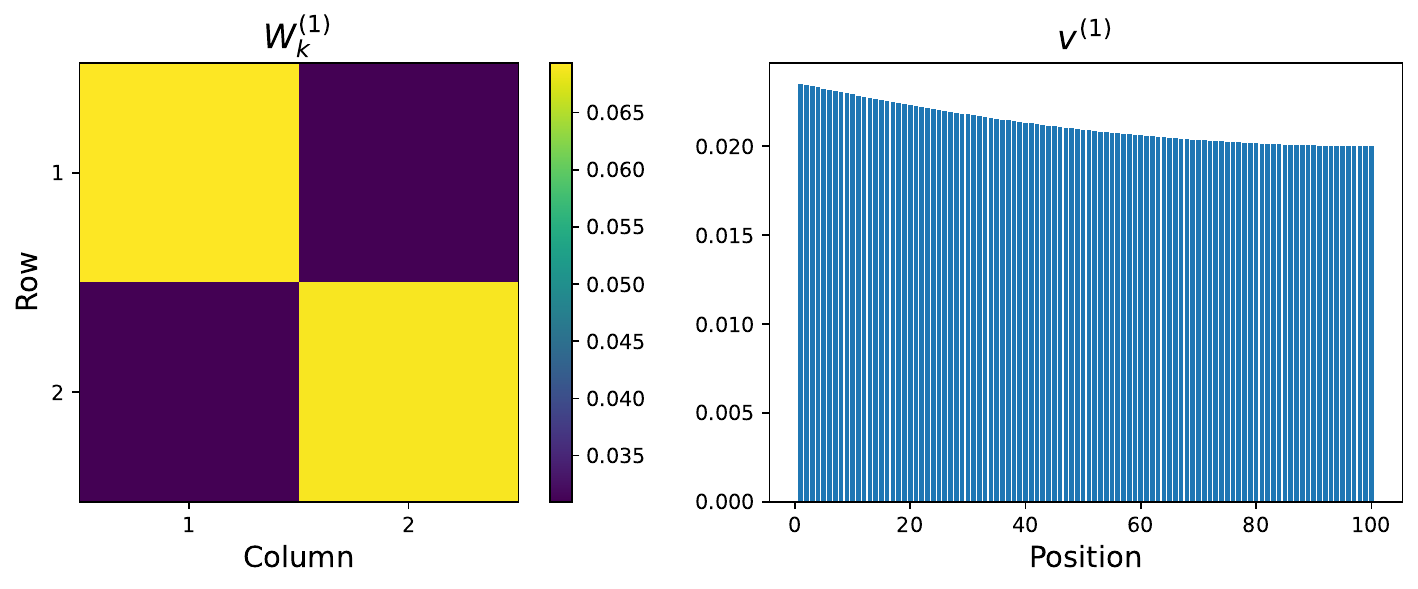}
    \caption{Values of $W_k$ and $v$ after one step of stochastic gradient descent from eqs.~\eqref{eq:Wk_step1},\eqref{eq:v_step1}. Hyperparameters: initialization $c=0.02$, learning rate $\eta=0.03$, sequence length $t=100$. The outer expectations are approximated using 10,000 samples.}
    \label{fig:grad_calc_step1}
\end{figure}

\textbf{Second step.}
We saw that we get a bias towards a diagonal $W_k$ after one update~\eqref{eq:Wk_1st_step}. Let $W_k = C^\prime + (\rho - c^\prime) I$, where $C^\prime = c^\prime 11^T$ and $\rho > c^\prime$, with $c^\prime$ denoting the off-diagonal term and $\rho$ the on-diagonal one. As in the first step, we can tune the margin parameter of the loss, $\Delta$ to be large enough, so that we operate on the linear part of the loss. We have for the gradient of $v$:
\begin{equation}
    \begin{split}
        \mathbb{E} \left[ \frac{\partial f(e)_{p, i}}{\partial v_j} \right] & = \mathbb{E} \left[\sum_{t^\prime = 1}^p \mathds{1}\left\{ x_{t^\prime} = i \right\} \sum_{s=1}^{t^\prime} \delta_{j(t^\prime - s +1)} \left( c^\prime + (\rho - c^\prime) e_{x_p}^T e_{x_s} \right) \mathds{1}\left\{ j \leq p \right\}\right] \\
        & = \mathbb{E} \left[ \sum_{t^\prime=1}^p \mathds{1}\left\{ x_{t^\prime} = i \right\} (c^\prime + \left(\rho - c^\prime\right) e_{x_p}^T e_{x_{t^\prime - j + 1}}) \mathds{1}\left\{ j \leq t^\prime \right\} \mathds{1} \left\{ j \leq p \right\} \right] \\
        & = \mathbb{E} \left[ \sum_{t^\prime=j}^p \mathds{1}\left\{ x_{t^\prime} = i \right\} (c^\prime + \left(\rho - c^\prime\right) \mathds{1} \left\{ x_{t^\prime - j + 1} = x_p \right\} ) \mathds{1} \left\{ j \leq p \right\} \right] \\
        & = c^\prime \pi_i \left( p - j + 1 \right) \mathds{1}\left\{ j \leq p \right\} + \left(\rho - c^\prime\right) \sum_{t^\prime = j}^p \mathbb{P} \left[ x_{t^\prime} = i, x_{t^\prime - j + 1} =  x_p \right] \mathds{1}\left\{ j \leq p \right\}.
    \end{split}
\end{equation}
We split the cases of $j = 1$ and $j \geq 2$.
For $j = 1$:
\begin{equation}      
    \begin{split}
        \mathbb{E} \left[\frac{\partial f(e)_{p, i}}{\partial v_1}\right] & = c^\prime \pi_i  p + \left(\rho - c^\prime\right) \sum_{t^\prime = 1}^p \mathbb{P} \left[ x_{t^\prime} = i =  x_p \right] \\
        & = c^\prime \pi_i  p + \left(\rho - c^\prime\right) \sum_{t^\prime = 1}^p \left(\mathcal{P}^{p-t^\prime}\right)_{ii} \pi_i \\
        & = c^\prime \pi_i  p + \frac{\left(\rho - c^\prime\right) \pi_i}{\lambda - 1} \sum_{t^\prime = 1}^p (\beta_{ii} \lambda^{p-t^\prime} + \gamma_{ii}) \\
        & = c^\prime \pi_i  p + \frac{\left(\rho - c^\prime\right) \pi_i}{\lambda - 1} \left( p \gamma_{ii} + \beta_{ii} \frac{\lambda^p - 1}{\lambda - 1} \right).
    \end{split}        
\end{equation}
\begin{equation}\label{eq:grad_2nd_i}  
    \begin{split}
        \mathbb{E} \left[\frac{\partial f(e)_{p, i}}{\partial v_1}\right] & = c^\prime \pi_i \left( p - j + 1 \right) \mathds{1}\left\{ j \leq p \right\} + \left(\rho - c^\prime\right) \sum_{t^\prime = j}^p \sum_{l = 1}^2 \mathbb{P} \left[ x_{t^\prime} = i, x_{t^\prime - j + 1} =  x_p = l \right] \mathds{1}\left\{ j \leq p \right\} \\
        & =  c^\prime \pi_i \left( p - j + 1 \right) \mathds{1}\left\{ j \leq p \right\} + \left(\rho - c^\prime\right) \sum_{l = 1}^2 \sum_{t^\prime = j}^p \mathbb{P} \left[ x_p = l \mid x_{t^\prime} = i \right] \mathbb{P} \left[ x_{t^\prime} = i \mid x_{t^\prime - j + 1} = l \right] \mathbb{P} \left[ x_{t^\prime - j + 1} = l \right] \mathds{1} \left\{ j \leq p \right\} \\
        & =  c^\prime \pi_i \left( p - j + 1 \right) \mathds{1}\left\{ j \leq p \right\} + \left(\rho - c^\prime\right) \sum_{l = 1}^2 \pi_l \sum_{t^\prime = j}^p \frac{ \beta_{il} \lambda^{p - t^\prime} + \gamma_{il}}{\lambda - 1} \frac{ \beta_{li} \lambda^{j-1} + \gamma_{li}}{\lambda - 1} \mathds{1} \left\{ j \leq p \right\} \\
        & = \left( c^\prime \pi_i \left( p - j + 1 \right) + \left(\rho - c^\prime\right) \sum_{l=1}^2 \frac{\pi_l}{(\lambda-1)^2} \sum_{t^\prime=j}^p \left( \beta_{il} \beta_{li} \lambda^{p-t^\prime+j-1} + \beta_{il} \gamma_{li} \lambda^{p-t^\prime} + \gamma_{il} \beta_{li} \lambda^{j-1} + \gamma_{il} \gamma_{li} \right) \right) \mathds{1} \left\{ j \leq p \right\} \\
        & = c^\prime \pi_i \left( p - j + 1 \right) \mathds{1}\left\{ j \leq p \right\} \\ & \; \; \; + \frac{\left(\rho - c^\prime\right)}{(\lambda-1)^2} \sum_{l=1}^2 \pi_l \left( \beta_{il} \beta_{li} \frac{\lambda^{p} - \lambda^{j-1}}{\lambda - 1} + \beta_{il} \gamma_{li} \frac{\lambda^{p-j+1} - 1}{\lambda - 1} + (p-j+1)\left( \gamma_{il} \beta_{li} \lambda^{j-1} + \gamma_{il} \gamma_{li} \right) \right) \mathds{1} \left\{ j \leq p \right\}.
    \end{split}
\end{equation}
and, similarly, for the derivative of the ground truth logit:
\begin{equation}
    \begin{split}
        \mathbb{E} \left[ \frac{\partial f(e)_{p, x_{p+1}}}{\partial v_j} \right] & = c^\prime \sum_{t^\prime = j}^p \mathbb{P} \left[ x_{t^\prime} = x_{p+1} \right] \mathds{1}\left\{ j \leq p \right\} + \left(\rho - c^\prime\right) \sum_{t^\prime = j}^p \mathbb{P} \left[ x_{t^\prime} = x_{p+1}, x_{t^\prime - j + 1} =  x_p \right] \mathds{1}\left\{ j \leq p \right\}.
    \end{split}
\end{equation}
The first term corresponds to the gradient of the first step and can be found in eq.~\eqref{eq:v1_grad_ground}. For the second term, we have for $j = 1$:
\begin{equation}      
    \begin{split}
        \sum_{t^\prime = 1}^p \mathbb{P} \left[ x_{t^\prime} = x_{p+1} =  x_p \right] & = \sum_{i = 1}^2 \sum_{t^\prime = 1}^p \mathbb{P} \left[ x_{p+1} = i \mid x_p = i \right] \mathbb{P} \left[ x_{p} = i \mid x_{t^\prime} = i \right] \mathbb{P} \left[ x_{t^\prime} = i \right] \\
        & = \sum_{i = 1}^2 \frac{\pi_i \mathcal{P}_{ii}}{\lambda - 1} \left( \beta_{ii} \frac{\lambda^p - 1}{\lambda - 1} + \gamma_{ii} p \right),
    \end{split}        
\end{equation}
and for $ j \geq 2$:
\begin{equation}      
    \begin{split}
        \sum_{t^\prime = j}^p \mathbb{P} \left[ x_{t^\prime} = x_{p+1}, x_{t^\prime - j + 1} =  x_p \right] & = \sum_{i = 1}^2 \sum_{l = 1}^2 \sum_{t^\prime = j}^p \mathcal{P}_{li} \mathbb{P} \left[ x_p = l \mid x_{t^\prime} = i \right] \mathbb{P} \left[ x_{t^\prime} = i \mid x_{t^\prime - j + 1} = l \right] \mathbb{P} \left[ x_{t^\prime - j + 1} = l \right],
    \end{split}
\end{equation}
which, by using the calculations of eq.~\eqref{eq:grad_2nd_i}, amounts to:
\begin{equation}
    \frac{1}{(\lambda-1)^2} \sum_{i=1}^2 \sum_{l=1}^2 \pi_l \mathcal{P}_{li} \left( \beta_{il} \beta_{li} \frac{\lambda^{p} - \lambda^{j-1}}{\lambda - 1} + \beta_{il} \gamma_{li} \frac{\lambda^{p-j+1} - 1}{\lambda - 1} + (p-j+1)\left( \gamma_{il} \beta_{li} \lambda^{j-1} + \gamma_{il} \gamma_{li} \right) \right) \mathds{1} \left\{ j \leq p \right\}.
\end{equation}

Thus, the updates will have a gradient contribution that is the same as the first step, $\frac{\partial \mathcal{L}^0}{\partial v_j}$, and another one that comes from the diagonal bias of $W_k$.

For $j=1$:
\begin{equation}\label{eq:v_step2_1}
    \begin{split}
         v_1^{(2)} & = v_1^{(1)} - \eta \frac{\partial \mathcal{L}}{\partial v_1} \\
    & = v_1^{(1)} - \eta c^\prime \frac{\partial \mathcal{L}^0}{\partial v_1} + \frac{\eta \left(\rho - c^\prime\right)}{t} \sum_{p=1}^t \sum_{i = 1} ^2 \mathbb{E}_{a, b} \left[ \left( \mathcal{P}_{ii} - \frac{1}{2} \right) \frac{\pi_i}{\lambda - 1} \left( \beta_{ii} \frac{\lambda^p - 1}{\lambda - 1} + \gamma_{ii} p \right) \right]\\
    & = v_1^{(1)} - \eta c^\prime \frac{\partial \mathcal{L}^0}{\partial v_1} + \frac{\eta \left(\rho - c^\prime\right)}{t} \mathbb{E}_{a, b} \left[\left( \frac{\lambda - \lambda^{t+1}}{1 - \lambda} - t \right) \frac{(a-1)(b-1)}{\left(\lambda - 1 \right)^2} + \left( \frac{(b - 1)^2 (a - \frac{1}{2}) + (a-1)^2 (b - \frac{1}{2})}{(\lambda - 1)^2} \right) \frac{t(t+1)}{2} \right].
    \end{split}
\end{equation}
For $j \geq 2$:
\begin{equation}\label{eq:v_step2_2}
    \begin{split}
         v_j^{(2)} & = v_j^{(1)} - \eta \frac{\partial \mathcal{L}}{\partial v_j} \\
    & = v_1^{(1)} - \eta c^\prime \frac{\partial \mathcal{L}^0}{\partial v_j} + \frac{\eta \left(\rho - c^\prime\right)}{t} \sum_{i = 1} ^2 \sum_{l=1}^2 \mathbb{E}_{a, b} \Bigg[ \frac{\pi_l}{(\lambda - 1)^2} \left( \mathcal{P}_{li} - \frac{1}{2} \right) \bigg[ \frac{\beta_{il} \beta_{li}}{\lambda - 1} \left( \frac{\lambda^j - \lambda^{t+1}}{1 - \lambda} - (t-j+1)\lambda^{j-1} \right) \\
    & \;\;\;\;\;\;\;\;\;\;\;\;\;\;\;\;\;\;\;\;\;\;\;\;\;\;\;\;\;\;\;\;\;\;\;\;\;\;\;\;\;\;\;\;\;\;\;\;\;\;\;\;\;\;\;\;\;\;\;\;\;\;\;\;\;\;\;\;\;\;\;\;\;\;\;\;\;\;\;\;\;\;\;\;\;\;\;\ + \frac{\beta_{il} \gamma_{li}}{\lambda - 1} \left( \frac{\lambda - \lambda^{t-j+2}}{\lambda - 1} - t + j -1 \right) \\
    & \;\;\;\;\;\;\;\;\;\;\;\;\;\;\;\;\;\;\;\;\;\;\;\;\;\;\;\;\;\;\;\;\;\;\;\;\;\;\;\;\;\;\;\;\;\;\;\;\;\;\;\;\;\;\;\;\;\;\;\;\;\;\;\;\;\;\;\;\;\;\;\;\;\;\;\;\;\;\;\;\;\;\;\;\;\;\;\ + \frac{(t-j+1)(t-j+2)}{2} \left( \gamma_{il} \beta_{li} \lambda^{j-1} + \gamma_{il} \gamma_{li} \right) \bigg] \Bigg].
    \end{split}
\end{equation}
See Figure \ref{fig:grad_calc_step2} for an empirical estimation of the diagonal contribution of the gradient for some choice of hyperparameters.  In leading order terms, the updates simplify to:

\begin{figure}
    \centering
    \includegraphics[scale=0.5]{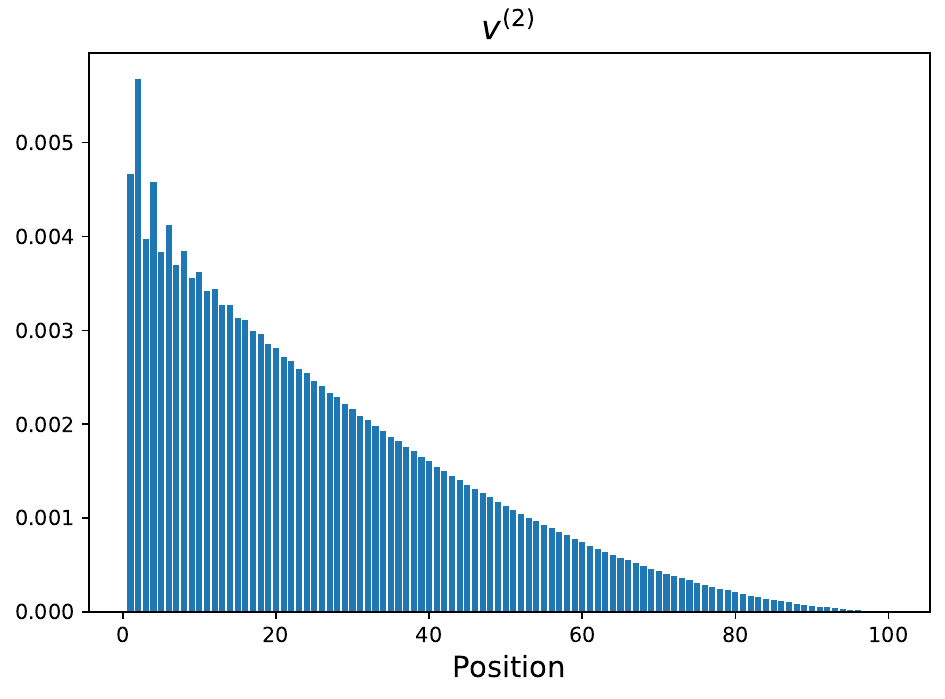}
    \caption{Empirical estimation of the 2nd step gradient of $v$ that comes from the diagonal bias of $W_k$ - see eqs.~\eqref{eq:v_step2_1},\eqref{eq:v_step2_2}. Hyperparameters: initialization $c=0.02$, learning rate $\eta=0.03$, sequence length $t=100$. The outer expectations are approximated using 10,000 samples.}
    \label{fig:grad_calc_step2}
\end{figure}

For $j = 1$:
\begin{equation}\label{eq:2ndstep_1}
    v^{(2)}_1 = v_1^{(1)} - \eta c^\prime \frac{\partial \mathcal{L}^{0}}{\partial v_1} + \frac{\eta \left(\rho - c^\prime\right)}{t} \mathbb{E}_{a, b} \left[ \frac{(b - 1)^2 (a - \frac{1}{2}) + (a-1)^2 (b - \frac{1}{2})}{(\lambda - 1)^2} \right] \frac{t(t+1)}{2} + O(1).
\end{equation}
For $j \geq 2$:
\begin{equation}\label{eq:2ndstep_j}
    \begin{split}
        v_j^{(2)} = v_j^{(1)} - \eta c^\prime \frac{\partial \mathcal{L}^0}{\partial v_j} + \frac{\eta \left(\rho - c^\prime\right)}{t} \frac{(t-j+1) (t-j+2)}{2} \mathbb{E}_{a, b} & \bigg[ 2 \frac{(a-1)(b-1)\left( (b-1) \left(a - \frac{1}{2}\right) + (a-1) \left(b - \frac{1}{2}\right) \right)}{(\lambda-1)^3} \lambda^{j-1}  \\ 
        & + \frac{(b-a)\left( (b-1)^2 \left(a - \frac{1}{2}\right) - (a-1)^2 \left(b - \frac{1}{2}\right) \right)}{(\lambda-1)^3} \bigg] + O(1),
    \end{split}
\end{equation}
where for all $j$ it is:
\begin{equation}\label{eq:init-one}
    v_j^{(1)} - \eta c^\prime \frac{\partial \mathcal{L}^{0}}{\partial v_j} = c + \frac{(c + c^\prime) \eta}{t} \frac{(t-j+1)(t-j+2)}{2} \mathbb{E}_{a, b} \left[ \frac{(a-1)^2 + (b-1)^2}{(\lambda - 1)^2} - \frac{1}{2} \right] + O(1).
\end{equation}

We now show that $v^{(2)}_2 > v^{(2)}_1$ in the large $t$ regime. Observe that the term of \eqref{eq:init-one} is the same for $j=1,2$. We evaluate the expectation of eq. \eqref{eq:2ndstep_1} that corresponds to $j = 1$:
\begin{equation}
    \mathbb{E}_{a, b} \left[ \frac{(b - 1)^2 (a - \frac{1}{2}) + (a-1)^2 (b - \frac{1}{2})}{(\lambda - 1)^2} \right] = \frac{1 - \ln2}{3}
\end{equation}
and the expectation of \eqref{eq:2ndstep_j}, leveraging symmetry of $a, b$ in the expression, that corresponds to $j = 2$:
\begin{equation}
    4 \mathbb{E}_{a, b} \left[ \frac{(a-1)(b-1)^2(a-\frac{1}{2})}{(a+b-2)^3} (a+b-1)\right] + 2 \mathbb{E}_{a, b} \left[ \frac{(b-a)(b-1)^2(a-\frac{1}{2})}{(a+b-2)^3} \right] = \frac{7 - 10 \ln 2}{2} + \frac{5 - 8\ln2}{6}.
\end{equation}
The latter term is larger, so, when $t \to \infty$, it is $v^{(2)}_2 > v^{(2)}_1$. Finally, observe that the expectation $\mathbb{E}_{a, b} \left[ \frac{(a-1)(b-1)^2(a-\frac{1}{2})}{(a+b-2)^3} (a+b-1)^{j-1}\right]$ is negative for $j$ odd and decreasing otherwise, which shows that $v^{(2)}_2 > v^{(2)}_j, j > 2$. So, we showed that the 2nd coordinate grows larger than the rest after the 2nd step.

\end{proof}

With this calculation at hand, we can ask questions about the optimization trajectory and how that would change if we alter the data distribution. In particular, in the following Corollary, we answer the question of how the gradient would change when $a=b$, i.e., when the transition matrix is
\begin{equation}
    \Pm = \begin{pmatrix}    
        a & 1-a \\ 1-a & a
    \end{pmatrix},
\end{equation}
with $a \sim Unif([0, 1])$. Notice that in the case there is not a unigrams solution (or in other words the unigrams solution is as good as the uniformly at random solution).
\begin{corollary}\label{cor:aequalb}
When $a=b$, it is 
\begin{equation}
    \begin{split}
        W_k^{(1)} & = \begin{pmatrix}c & c \\ c & c\end{pmatrix} + c \eta \left[ O(t) \begin{pmatrix}\infty & 0 \\ 0 & \infty\end{pmatrix}  + O(1) \right], \\
        v_j^{(1)} & = c - \frac{c \eta}{2t} \left(t-j+1\right) + O(1/t), j \in [t].
    \end{split} 
\end{equation}
\end{corollary}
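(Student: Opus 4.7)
The plan is to substitute $a = b$ throughout the expressions derived in the proof of Lemma~\ref{lem:min_model} and track the cancellations that follow from the additional symmetry. When $a = b$, the transition matrix is doubly stochastic, so $\pi_1 = \pi_2 = 1/2$, the eigenvalue $\lambda = 2a - 1$ is uniform on $[-1,1]$, and the identities $b - a = 0$, $(a-1)(b-1) = (\lambda-1)^2/4$, and $(a-1)^2 + (b-1)^2 = (\lambda-1)^2/2$ simplify the bookkeeping.

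For $W_k^{(1)}$, I would start from~\eqref{eq:Wk_1st_step}: its dominant $O(t^2)$ coefficient is proportional to $(b-a)$ and so vanishes identically, dropping the order to $O(t)$. Returning to~\eqref{eq:Wk_step1} and substituting the specialized values $\beta_{11} = \beta_{22} = -\beta_{12} = -\beta_{21} = a-1$ and $\gamma_{ij} = a-1$, one finds that the $\beta\beta$ combinations (coefficient of $(I)$) and $\gamma\gamma$ combinations (coefficient of $(IV)$) vanish in every entry. The off-diagonal entries then retain only $\frac{1}{2}(II) - (a-1)(III)$, and using the asymptotics $(II) \sim t^2(1-a)$ and $(III) \sim -t^2/2$ these two pieces cancel at leading order in $t$, giving the claimed $0$ off-diagonal coefficient. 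For the diagonal, the analogous combination $\frac{1}{2}(II) + (a-1)(III) \sim t^2(1-a)$ is non-zero, producing the diagonal $O(t)$ bias recorded in the statement.

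For $v_j^{(1)}$, I would specialize~\eqref{eq:v_step1}. The first bracketed term vanishes identically since $\frac{(a-1)^2 + (b-1)^2}{(\lambda-1)^2} = \frac{1}{2}$ exactly, killing the $O(t)$ piece that was dominant in the general case. The only surviving contribution is
\begin{equation*}
    \frac{2a-1}{4(a-1)}\left(\frac{\lambda - \lambda^{t-j+2}}{1-\lambda} - (t-j+1)\right).
\end{equation*}
For $\lambda$ bounded away from $1$, $\lambda^{t-j+2}$ is exponentially small, so the integrand is dominated by $-\frac{(2a-1)(t-j+1)}{4(a-1)}$, while the residual $\frac{(2a-1)\lambda}{4(a-1)(1-\lambda)}$ piece contributes $O(1)$ and is absorbed into the $O(1/t)$ error. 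Taking the $a$-expectation of the dominant piece gives a coefficient of $-\frac{1}{4}\int_0^1 \frac{2a-1}{a-1}\,da = -\frac{1}{2}$ (after the finite-part regularization supplied by the suppressed exponential), yielding $v_j^{(1)} = c - \frac{c\eta}{2t}(t-j+1) + O(1/t)$.

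The main obstacle is the delicate handling of the integrand near $\lambda = \pm 1$ (the near-absorbing or near-periodic chains). The naive expansion produces $1/(a-1)$ and $1/(a-1)^2$ singularities; these must cancel once the $\lambda^{t-j+2}$ correction is retained, since the underlying gradient is a finite expectation. The clean way to verify this is to Taylor-expand the full integrand in $u = a-1$ around $u=0$, check that the negative powers of $u$ cancel between the three pieces of the integrand, and collect the finite remainder; the same sort of cancellation also regularizes the diagonal $W_k$ coefficient. This is mechanical but requires careful bookkeeping to confirm that the only surviving leading-order quantities are those stated.
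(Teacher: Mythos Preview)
Your approach—substituting $a=b$ into the explicit formulas from Lemma~\ref{lem:min_model} and tracking the resulting cancellations—is exactly the paper's route: it plugs $a=b$ into \eqref{eq:Wk_11}, \eqref{eq:Wk_12}, \eqref{eq:v_step1}, observes the same vanishing of the $(b-a)$-proportional $(IV)$ term in $W_k$ and of the $\frac{(a-1)^2+(b-1)^2}{(\lambda-1)^2}-\tfrac{1}{2}$ term in $v_j$, and reads off the leading orders. Your coefficient bookkeeping for the off-diagonal cancellation and diagonal survival in $W_k$ matches the paper's computation line for line.

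There is one genuine misstep in your final paragraph: you write that ``the same sort of cancellation also regularizes the diagonal $W_k$ coefficient,'' but the statement records $\infty$ on the diagonal precisely because this coefficient does \emph{not} regularize. The paper obtains the diagonal $O(t)$ coefficient as $\mathbb{E}_{a}\bigl[-\tfrac{a-1/2}{8(a-1)}\bigr]$, which diverges logarithmically at $a=1$; there are no lower-order-in-$t$ corrections to rescue it, since the singularity already sits in the leading piece. So for $W_k$ your Taylor-expansion program should not attempt to cancel anything—the $\infty$ is the intended answer, not a regularization target.

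For $v_j$, your worry about the $1/(a-1)$ singularity is well placed and more careful than the paper: the paper's line~\eqref{eq:aequalb_proof_v} writes the prefactor as $2(2a-2)$ rather than the $2\lambda=2(2a-1)$ that you (correctly) derive, which makes the integrand constant and sidesteps the issue. With the correct prefactor $\tfrac{2a-1}{4(a-1)}$, the $-(t-j+1)$ piece alone is not integrable, and retaining $\lambda^{t-j+2}$ is indeed necessary. However, the parenthetical ``finite-part regularization \ldots\ yielding $-\tfrac{1}{2}$'' is not a proof; you would need to actually carry out the combined integral and verify what the leading-in-$(t-j+1)$ coefficient is, rather than asserting it matches the stated $-\tfrac{1}{2}$.
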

\begin{proof}
    We first compute the gradient with respect to $W_k$, by setting $a=b$ in eq.~\eqref{eq:Wk_11},\eqref{eq:Wk_12}. For $m, l = 1, 1$, we have:
    \begin{equation}
        \left(W_{k}^{(1)}\right)_{1, 1} = c + \frac{\eta c}{t} \mathbb{E}_{a \sim Unif([0, 1])} \left[ \frac{(a-1)(a-\frac{1}{2})}{(2a-2)^3} \left[\frac{2(a-1)^2}{(2a-2)^2} (II) + \frac{2(a-1)^2}{2a-2} (III) \right] \right],
    \end{equation}
    where $(II), (III)$ are defined in eq.~\eqref{eq:Wk_1st_step}. Notice that the $(IV)$ with the cubic, $O(t^3)$, dependence disappeared, so the leading order terms are $O(t^2)$. By substituting the leading order terms from $(II), (III)$, we get:
    \begin{equation}\label{eq:aequalb_for_proof}
        \begin{split}
            \left(W_{k}^{(1)}\right)_{1, 1} & = c + \frac{\eta c}{t} \mathbb{E}_{a \sim Unif([0, 1])} \left[ \frac{(a-1)(a-\frac{1}{2})}{(2a-2)^3} \left[\frac{2(a-1)^2}{(2a-2)^2} \frac{2 - 2a}{2}  - \frac{2(a-1)^2}{2a-2} \frac{1}{2} \right] \right] O(t^2) \\
            & = c + \eta c \mathbb{E}_{a \sim Unif([0, 1])} \left[ - \frac{(a-\frac{1}{2})}{8 (a-1)} \right] O(t) \to \infty.        
        \end{split}
    \end{equation}
    For $m, l = 1, 2$, the $O(t^3)$ terms similarly disappear, and we have:
    \begin{equation}
        \begin{split}
            \left(W_{k}^{(1)}\right)_{1, 2} & = c + \frac{\eta c}{t} \mathbb{E}_{a \sim Unif([0, 1])} \left[ \frac{(a-1)(a-\frac{1}{2})}{(2a-2)^3} \left[\frac{2(a-1)^2}{(2a-2)^2} (II) - \frac{2(a-1)^2}{2a - 2} (III) \right] \right] \\
            & = c + \frac{\eta c}{t} \mathbb{E}_{a \sim Unif([0, 1])} \left[ \frac{(a-1)(a-\frac{1}{2})}{(2a-2)^3} \left[\frac{2(a-1)^2}{(2a-2)^2} \frac{2 - 2a}{2}  + \frac{2(a-1)^2}{2a-2} \frac{1}{2} \right] \right] O(t^2) \\
            & = c.
        \end{split}
    \end{equation}
    Now, for the $v$ vector, we observe that when we set $a=b$ in eq.~\eqref{eq:v_step1}, the leading term disappears and its expression simplifies to:
    \begin{equation}\label{eq:aequalb_proof_v}
        \begin{split}
             v_j^{(1)} & = c - \frac{c \eta}{t} \mathbb{E}_{a \sim Unif([0, 1])} \left[2 (2a-2) \frac{(a-1)^2}{(2a -2)^3} \left(t - j + 1 \right) \right] + O(1/t) \\
             & = c - \frac{c \eta}{2t} \left(t-j+1\right) + O(1/t).
        \end{split}
    \end{equation}
\end{proof}
The previous calculations reveal that there is less ``signal" towards the unigrams solution when $a=b$.

Finally, we provide our main technical result that demonstrates that two steps of stochastic gradient descent lead the model to the bigrams solution.

\begin{proposition}
Suppose we initialize the model of eq.~\eqref{eq:model} with $W_k^{(0)} = w_{init} 11^T$ and $v = v_{init} 1^T$. Consider a two step procedure, where we first train with gradient descent on the original distribution $a \sim \mathrm{Unif} (0, 1),  b \sim \mathrm{Unif} (0, 1)$, and then at the second step with $a = b \sim \mathrm{Unif} ([\frac{1}{2} - \epsilon, \frac{1}{2} + \epsilon])$, $\epsilon \in (0, 1/2)$. Let $\eta_{1} = O\left(\frac{1}{t^2}\right)$ and $\eta_{2} = O\left(\frac{1}{t}\right)$ be the learning rates for the 1st and 2nd step of gradient descent, and let $\lambda_{wd} = 1$ be the weight decay coefficient used in the second step. Then, after 2 steps of gradient descent, for $t \to \infty$, it is:
\begin{equation}
    f(e)_{p, i} = O(\epsilon^4) \sum_{t^\prime = 1}^p \mathds{1} \left\{x_{t^\prime} = i\right\} \mathds{1} \left\{x_{p} = x_{t^\prime - 1}\right\} + O(\epsilon^6).
\end{equation}
\end{proposition}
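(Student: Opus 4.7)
The plan is to treat the two gradient steps sequentially, leveraging Lemma~\ref{lem:min_model} for the first step and an adaptation of Corollary~\ref{cor:aequalb} (restricted to the narrow band $a \in [\tfrac12-\epsilon, \tfrac12+\epsilon]$) for the second. In the first step, with learning rate $\eta_1 = O(1/t^2)$, I apply Lemma~\ref{lem:min_model} directly: the dominant $O(t^2)$ contribution to $W_k$ combined with $\eta_1 = O(1/t^2)$ yields $W_k^{(1)} = c'\mathbf{1}\mathbf{1}^\top + (\rho-c')I + o(1)$ with $\rho - c' = \Theta(1)$ and the $B \approx 4A$ diagonal dominance preserved, while the corresponding contribution to $v$ is only $O(\eta_1 t) = O(1/t)$, so $v^{(1)} = c\mathbf{1} + O(1/t)$. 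This is exactly the ``same-token indicator'' configuration $e_{x_p}^\top W_k^{(1)} e_{x_s} = c' + (\rho-c')\mathbf{1}\{x_p = x_s\}$ that the bigram circuit requires in the second layer, while the positional weights $v$ have not yet localized on $j=2$.

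For the second step, I would specialize to $a = b \sim \mathrm{Unif}(\tfrac12-\epsilon,\tfrac12+\epsilon)$, under which $\pi = (\tfrac12,\tfrac12)$ and the non-trivial eigenvalue is $\lambda = 2a-1 \sim \mathrm{Unif}(-2\epsilon, 2\epsilon)$; I read $\lambda_{wd}=1$ on $W_k$ as collapsing the previous constant background so that $W_k^{(2)}$ is the (scaled) gradient direction alone. Substituting $a = b$ into the second-step gradient expressions \eqref{eq:v_step2_1}--\eqref{eq:v_step2_2}, the cubic-in-$t$ ``unigram'' term $\gamma_{il}\gamma_{li}$ cancels (the key mechanism of Corollary~\ref{cor:aequalb}, via the sign flip $\mathcal{P}_{11}-\tfrac12 = -(\mathcal{P}_{12}-\tfrac12)$), while the sum $\sum_{i,l}\pi_l(\mathcal{P}_{li}-\tfrac12)\beta_{li}\gamma_{il}$ evaluates to $2q(q-\tfrac12)^2$ (with $q = a-\tfrac12$), and the $(\lambda-1)^{-2}$ prefactor cancels the $(q-\tfrac12)^2$. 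The leading contribution to $v_j^{(2)} - v_j^{(1)}$ thus becomes proportional to $(\rho-c')\,\mathbb{E}[\lambda^j]\cdot(t-j+1)(t-j+2)/t$, and since $\mathbb{E}[\lambda^j] = (2\epsilon)^j/(j+1)$ for even $j$ and $0$ for odd $j$, the coordinate $v_2^{(2)}$ grows by $\Theta(\epsilon^2)$, odd coordinates stay at $c + O(1/t)$, and higher even coordinates grow only by $O(\epsilon^4),O(\epsilon^6),\ldots$.

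For the $W_k$ update in step 2, I would reuse the computation leading to \eqref{eq:aequalb_for_proof} with $a$ now confined to $[\tfrac12-\epsilon,\tfrac12+\epsilon]$: Taylor expanding $-(a-\tfrac12)/(8(a-1))$ around $a=\tfrac12$ and integrating term-by-term, odd powers of $q$ vanish and the leading contribution is $\epsilon^2/6 + O(\epsilon^4)$, so after the weight decay the diagonal bias $\rho^{(2)} - c^{(2)}$ is of order $\epsilon^2$ with an identically vanishing off-diagonal to that order. Substituting both $v^{(2)}$ and $W_k^{(2)}$ into the model then gives $f(e)_{p,i} = v_2^{(2)}(\rho^{(2)}-c^{(2)})\sum_{t'}\mathbf{1}\{x_{t'}=i\}\mathbf{1}\{x_p = x_{t'-1}\} + \text{(cross terms)}$, where the leading coefficient is $\Theta(\epsilon^2)\cdot\Theta(\epsilon^2) = \Theta(\epsilon^4)$ and the cross terms from $v_4,v_6,\ldots$ paired with $W_k^{(2)}$ are $O(\epsilon^{2+2k})$ for $k\ge 2$, collectively $O(\epsilon^6)$. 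The main obstacles I anticipate are bookkeeping rather than conceptual: (i) tracking precisely which powers of $\epsilon$ appear in each cross term so that nothing of order $\epsilon^4$ leaks out of the principal bigram contribution, (ii) showing that the $O(1/t)$ Step~1 deviations in $v$ are dominated by the $\epsilon^2$ Step~2 signal (which is automatic as $t\to\infty$ at fixed $\epsilon$, the regime of the statement), and (iii) justifying rigorously that $\lambda_{wd}=1$ indeed removes the constant background of $W_k^{(1)}$ so that the ``$c'\mathbf{1}\mathbf{1}^\top$'' piece does not produce spurious contributions at order $\epsilon^4$. The structural heart---the even/odd cancellation of $\mathbb{E}[\lambda^j]$ enforced by the symmetric distribution, feeding a diagonal-biased $W_k$---is clean; the work lies in verifying the error terms.
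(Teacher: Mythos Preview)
Your plan is essentially the paper's own argument: use Lemma~\ref{lem:min_model} with $\eta_1=O(1/t^2)$ so that step~1 creates a $\Theta(1)$ diagonal bias in $W_k$ while leaving $v$ unchanged up to $O(1/t)$; then on the $a=b\sim\mathrm{Unif}(\tfrac12-\epsilon,\tfrac12+\epsilon)$ distribution, exploit the symmetry of $\lambda=2a-1$ about $0$ so that only even $j$ survive in $\mathbb{E}[\lambda^j]$, and use the $a=b$ cancellation (Corollary~\ref{cor:aequalb}) to kill the off-diagonal of $W_k^{(2)}$. Your $\epsilon$-power accounting ($v_2^{(2)}\sim\epsilon^2$, $W_k^{(2)}$ diagonal $\sim\epsilon^2$, product $\epsilon^4$, next cross-term $\epsilon^6$) is in fact tidier than the paper's somewhat loose bookkeeping.

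There is, however, one genuine gap. You apply $\lambda_{wd}=1$ only to $W_k$, and explicitly write that ``odd coordinates stay at $c+O(1/t)$''. If the constant baseline $c=v_{init}$ remains in \emph{every} $v_j^{(2)}$, then pairing it with $W_k^{(2)}=\Theta(\epsilon^2)I$ produces
\[
c\,\Theta(\epsilon^2)\sum_{t'=1}^{p}\mathds{1}\{x_{t'}=i\}\,\bigl|\{s\le t':x_s=x_p\}\bigr|,
\]
which is $O(\epsilon^2)$ (not $O(\epsilon^4)$) and is not the bigram statistic, so it swamps the term you want. Equivalently, $v_2^{(2)}=c+\Theta(\epsilon^2)$ would make your leading coefficient $v_2^{(2)}(\rho^{(2)}-c^{(2)})=c\,\Theta(\epsilon^2)+\Theta(\epsilon^4)$, again of the wrong order. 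The paper avoids this by applying $\lambda_{wd}=1$ to $v$ as well, so that $v_j^{(2)}=(1-\lambda_{wd})v_j^{(1)}-\eta_2\,\partial_v\mathcal{L}$ carries no $v_{init}$ baseline and the odd coordinates are $O(1/t)\to 0$. Once you make that one change, your outline and the paper's proof coincide; your obstacles~(i)--(iii) are then indeed just bookkeeping.
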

\begin{proof}
For the first step, we have from Lemma 1:
\begin{equation}
    W_k^{(1)} = w_{init}11^T + v_{init} \eta_1 \begin{pmatrix}B & A \\ A & B\end{pmatrix} O(t^2) + v_{init} \eta_1 O(t).
\end{equation}
For $\eta_1 = O\left(\frac{1}{t^2}\right)$, then
\begin{equation}
    W_k^{(1)} = w_{init}11^T + \begin{pmatrix}B & A \\ A & B \end{pmatrix} O(1) + O(1/t).
\end{equation}
Recall, from Lemma 1, that we have $v^{(1)}_j = v_{init} + \eta_1 w_{init} O(t)$ for all $j \in [t]$, so, for $\eta_1 = O\left(\frac{1}{t^2}\right)$, it will be $v_j^{(1)} = v_{init}$ for all $j \in [t]$.

Assume now that at the second step we train with weight decay $\lambda_{wd} = 1$ and the data distribution changes to $a=b \sim \mathrm{Unif} ([\frac{1}{2} - \epsilon, \frac{1}{2} + \epsilon])$. Then, from eq.~\eqref{eq:aequalb_for_proof}\footnote{That equation refers to the first step of gradient descent for $a=b$, but notice that since the model is linear in $W_k$, and in this case $v_j^{(1)} = v_{init}$, it characterizes the 2nd step too.} the update of the 2nd layer would be:
\begin{equation}
    \begin{split}
        W_k^{(2)} & = (1 - \lambda_{wd}) W_k^{(1)} - \eta_2 \nabla \mathcal{L} \\
        & = v_{init} \eta_2 \begin{pmatrix} \int_{\frac{1}{2} - \epsilon}^{\frac{1}{2} + \epsilon} \frac{a - \frac{1}{2}}{(a - 1)} & 0 \\ 0 & \int_{\frac{1}{2} - \epsilon}^{\frac{1}{2} + \epsilon} \frac{a - \frac{1}{2}}{(a - 1)} \end{pmatrix} O(t) + O(1) \\
        & = v_{init} \eta_2 \begin{pmatrix} 2\epsilon + \frac{1}{2} \ln \left( \frac{\frac{1}{2} - \epsilon}{\frac{1}{2} + \epsilon} \right)  & 0 \\ 0 & 2\epsilon + \frac{1}{2} \ln \left( \frac{\frac{1}{2} - \epsilon}{\frac{1}{2} + \epsilon} \right) \end{pmatrix} O(t) + O(1).
    \end{split}
\end{equation}
For the update of $v$ we have:
\begin{equation}
    \begin{split}
        v_j^{(2)} & = (1 - \lambda_{wd}) v_j^{(1)} - \frac{\partial \mathcal{L}}{\partial v_j} \\
        & = \eta_2 A \frac{\partial \mathcal{L}_{const}}{\partial v_j} + \eta_2 3A \frac{\partial \mathcal{L}_{diag}}{\partial v_j} \\
        & = A \frac{\eta_2 \rho}{t} \left( \frac{(t - j + 1)(t-j+2)}{2} \left( \frac{(a-1)^2 + (b-1)^2}{(\lambda - 1)^2} - \frac{1}{2} \right) + O(t) \right) +  3A \eta_2 \frac{\partial \mathcal{L}_{diag}}{\partial v_j}.
    \end{split}
\end{equation}
For $a = b$, it is only the $O(t)$ part inside the parenthesis that survives from the second term (see eq.~\eqref{eq:aequalb_proof_v}) and by substituting the diagonal gradient from eqs.~\eqref{eq:2ndstep_1},\eqref{eq:2ndstep_j} when $a=b$, we get for all $j \in [t]$:
\begin{equation}
    \begin{split}
        v_j & = \eta_2 \rho A O(1) + 3A \frac{\eta_2 }{t} \frac{(t-j+1)(t-j+2)}{2} 2 \mathbb{E}_{a \sim Unif([1/2 - \epsilon, 1/2 + \epsilon])} \left[ \frac{a - 1 / 2}{4} (2a - 1)^{j-1} \right] + 3A \frac{\eta_2 }{t} O(t).
    \end{split}
\end{equation}
We calculate the expectation:
\begin{equation}
    \mathbb{E}_{a \sim Unif([1/2 - \epsilon, 1/2 + \epsilon])} \left[ \frac{a - 1 / 2}{4} (2a - 1)^{j-1} \right] = 0,
\end{equation}
when $j = 1$. For $j \geq 2$:
\begin{equation}
    \begin{split}
        \mathbb{E}_{a \sim Unif([1/2 - \epsilon, 1/2 + \epsilon])} \left[ (a - 1 / 2) (2a - 1)^{j-1} \right] & = \int_{1/2 - \epsilon}^{1/2 + \epsilon} (a - 1 / 2) (2a - 1)^{j-1} \\
        & = \frac{1}{2j} \left[ (2a-1)^j (a-\frac{1}{2}) \right]_{1/2-\epsilon}^{1/2+\epsilon} - \frac{1}{2j} \int_{1/2 - \epsilon}^{1/2 + \epsilon} (2a-1)^j da \\
        & = \frac{1}{2j} \epsilon \left( (2\epsilon)^j + (-2\epsilon)^j \right) - \frac{1}{4j(j+1)} \left( (2\epsilon)^{j+1} - (-2\epsilon)^{j+1} \right).
    \end{split}
\end{equation}
For $j = 2k+1$, this equals to 0. For $j = 2k$, it is equal to $\frac{4^k}{2k+1}\epsilon^{2k+1}$.

Thus, by setting $\eta_2 = O\left( \frac{1}{t} \right)$, we get:
\begin{equation}
    \begin{split}
        W_k^{(2)} & = O(\epsilon) \begin{pmatrix} 1  & 0 \\ 0 & 1\end{pmatrix} + O\left( \frac{1}{t} \right).
    \end{split}
\end{equation}
\begin{equation}
    v_j = v_{init} + O\left(\frac{(t-j+1)(t-j+2)}{t^2}\right) \sum_{s = 1, \\ s=2k}^t \delta(s-j) \epsilon^{j+1} + O\left(\frac{1}{t}\right),
\end{equation}
where $\delta$ is Dirac's delta function.

Thus, after 2 steps of gradient descent, as $t \to \infty$, the prediction of the model will be
\begin{equation}
    f(e)_{p, i} = O(\epsilon^4) \sum_{t^\prime = 1}^p \mathds{1} \left\{x_{t^\prime} = i\right\} \mathds{1} \left\{x_{p} = x_{t^\prime - 1}\right\} + O(\epsilon^6).
\end{equation} 
\end{proof}

An interesting observation that stems from the proof is that the learning rate of the first step is $O(1/t^2)$, where recall $t$ is the sequence length, while for the second step the learning rate is $O(1/t)$ (much larger). The first step in our proof corresponds to the learning of the second layer $W_k$, while the second step ``cleans up" $W_k$ and learns the first layer $v$. Interestingly, this is what we also observe in the experiments with the transformers and this minimal model: the first drop is immediate for the learning of the 2nd layer, while the second loss drop happens after a long plateau and corresponds to the ``grokking" of the positional embeddings.
\section{Experimental Details}
We train transformers of the form \eqref{eq:tf-def} with the AdamW optimizer with learning rate $3e-5$, batch size $64$, and hidden dimension $16$. The sequence length of the examples is $100$ tokens. The minimal model was trained with SGD, with batch size $64$, and learning rate $2e-3$. For $3$-grams, a learning rate of $3e-2$ was used. We use PyTorch 2.1.2. Some of the training and model code was based on minGPT \cite{MinGPT}.


The experiments all measure the outputs of the models at the last token.
\begin{figure}
    \centering
    \includegraphics[width=0.6\linewidth]{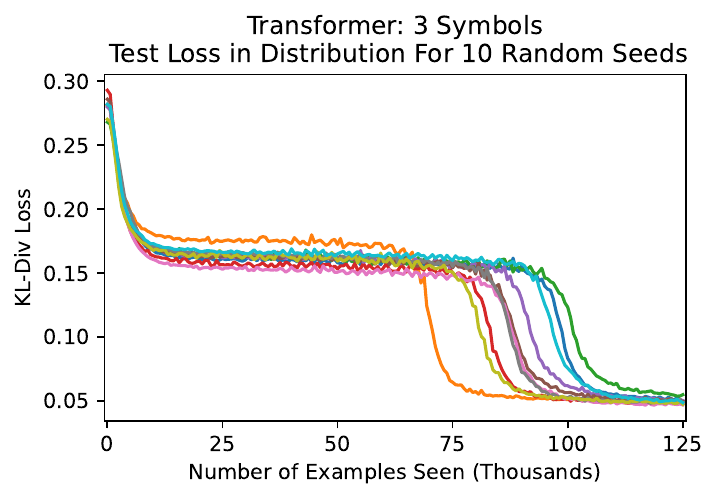}
    \caption{In distribution test loss for $10$ two layer attention only transformers, with random seeds $0,1,\dots 9$ (randomness affects initialization and the training data). The training dynamics are consistent for each model, though the exact position of the phase transitions.}
\end{figure}
\begin{figure}
    \centering
    \includegraphics[width=0.6\linewidth]{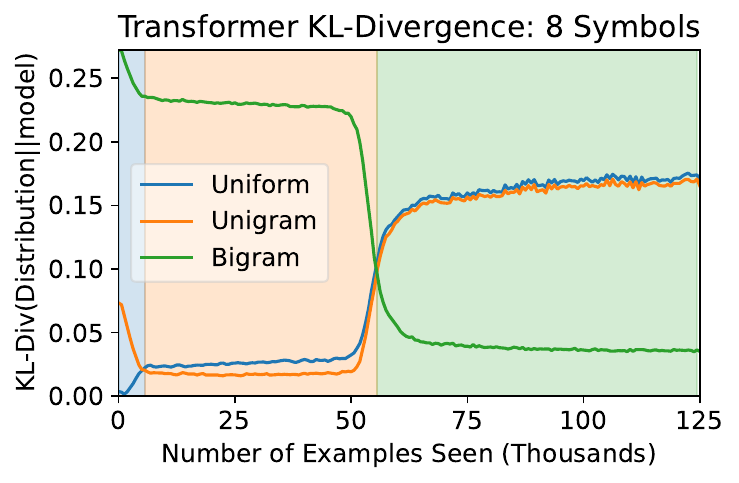}
    \caption{Our results extend to more symbols than $k=2$ or $k=3$. The KL-divergence between the transformer and strategies over training. This required a sequence length greater than $100$ ($200$ in this case) for the difference between unigrams and bigrams to be large enough for the unigram phase to be visible (in either case there was a plateau before the final drop in test loss).}
\end{figure}
\paragraph{Distributions Used in Figure \ref{fig:interpolate}}
\label{para:interpolate}
This graph is for $k=2$ symbols, to simplify the distributions used. The exact method of generating the distributions used in Figure \ref{fig:interpolate} is as follows. Let $p\in (0,1)$ parametrize the distribution. Choose a uniformly random number $x\sim U(0,1)$, then define $\mu = x + p (1- 2x)$. Then choose $y\sim U(\mu- 0.2, \mu+0.2)$, and then $y = max(0,min(1,y))$. Then, with probability $1/2$, $a=x$ and $b=y$, otherwise $a=y$ and $b=x$. Then the transition matrix is
$$\begin{pmatrix}
    a & 1-a \\
    b & 1-b
\end{pmatrix}.$$

\section{Additional Experiments}
\begin{figure}[b!]
    \includegraphics[width=\textwidth]{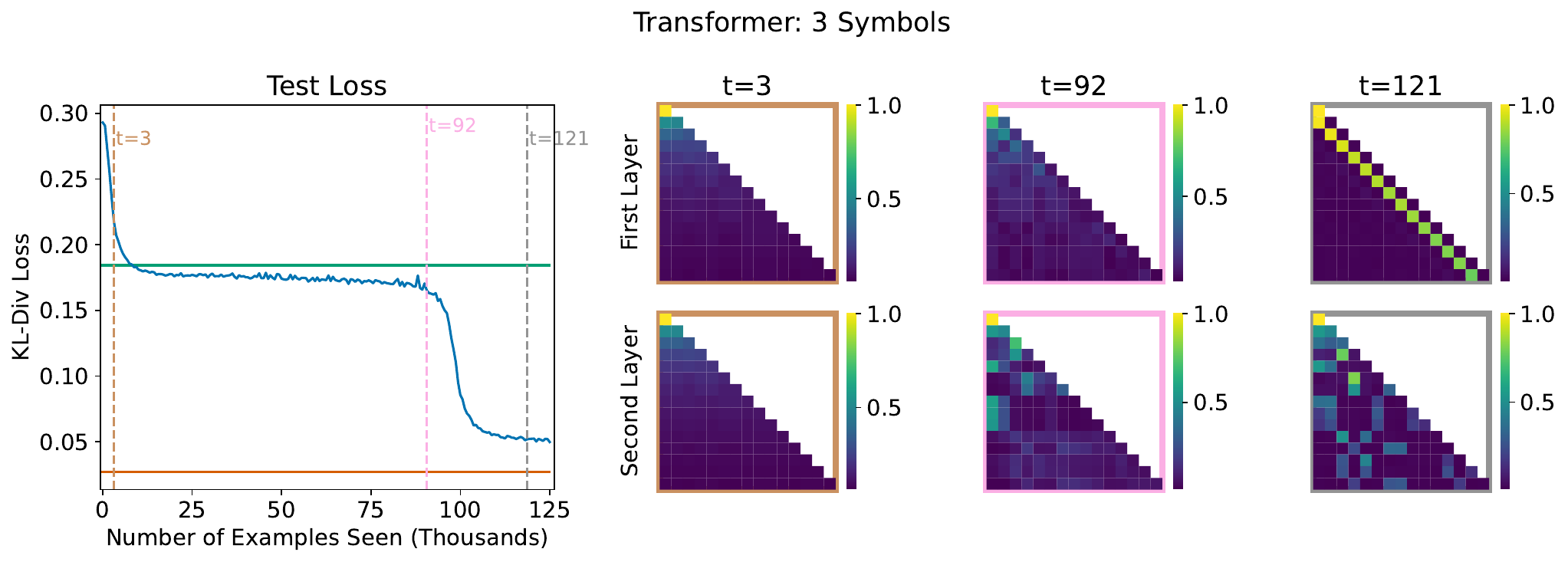}
    \caption{A two layer attention-only transformer was trained with cross entropy loss on ICL-MC. The heatmaps on the right represent part of the attention for the transformer at various time steps, specifically the value of $A$ from \eqref{eq:attn-def}. The top row are showing $A$ from the first layer, and the bottom row from the second layer.}
    \label{fig:attn_heatmap}
\end{figure}

\begin{figure*}[t!]
    \centering
    \includegraphics[width=0.46\linewidth]{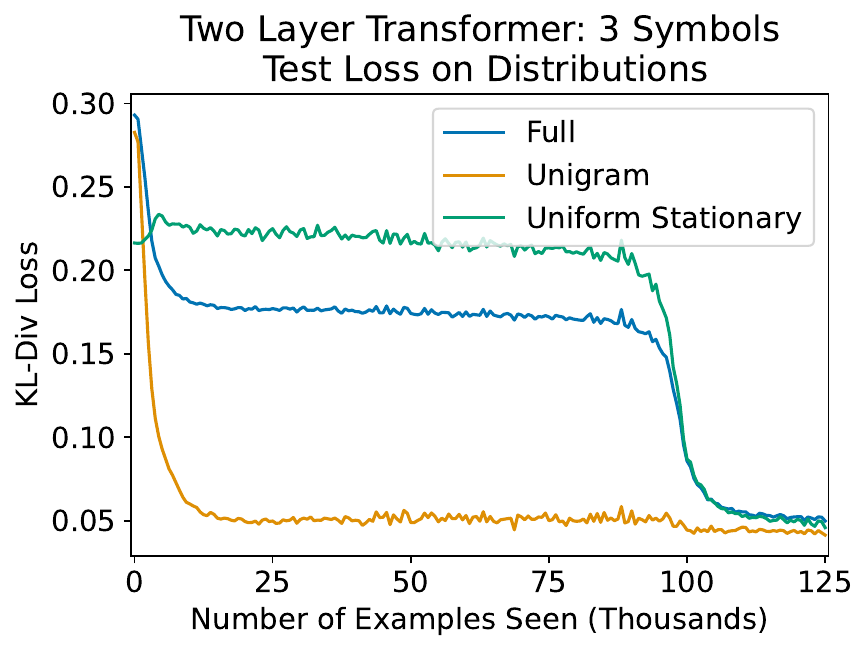}
    \includegraphics[width=0.46\linewidth]{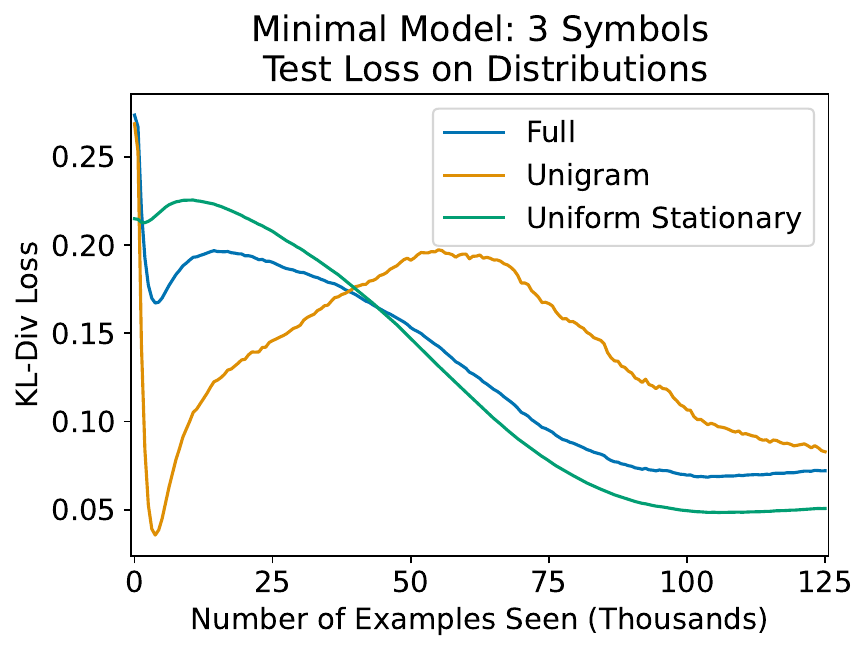}
    \caption{A two layer attention-only transformer (top) and minimal model (\ref{eq:min-mod-def}) (bottom), trained on the main task with ICL-MC with cross entropy loss, test loss measured by KL-Divergence from the underlying truth (labels based on transition probabilities, not samples). The distributions test loss is measured in are (from left to right) in-distribution, a distribution where each token is sampled iid, and a distribution over uniformly random doubly stochastic transition matrices (equivalently, stationary distribution is identity, or unigram based guesses are as good as guessing uniform probability). 
    For both models, the in distribution test loss quickly drops to the level of the unigram algorithm. }
    \label{fig:tf_loss}
\end{figure*}
\begin{figure*}
    \includegraphics[width=0.65\textwidth]{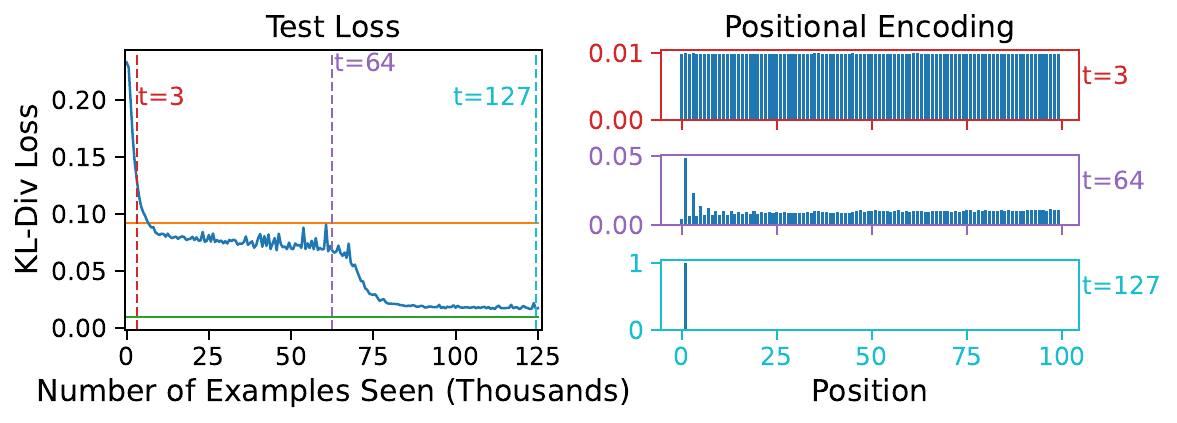}
    \includegraphics[width=0.36\textwidth]{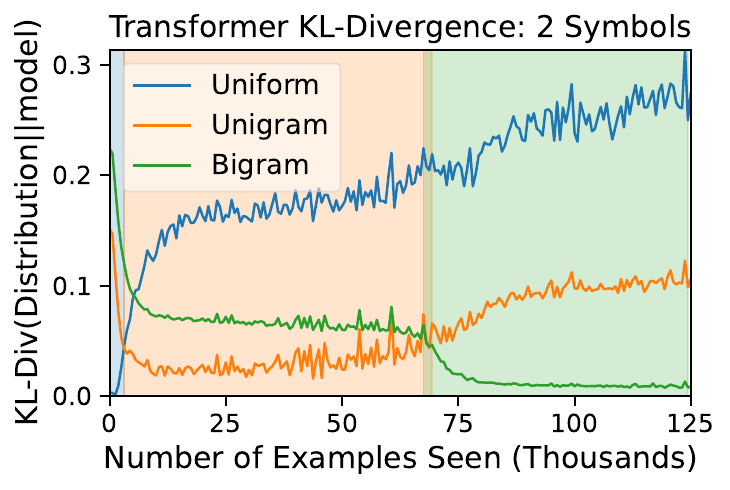}
    \includegraphics[width=0.65\textwidth]{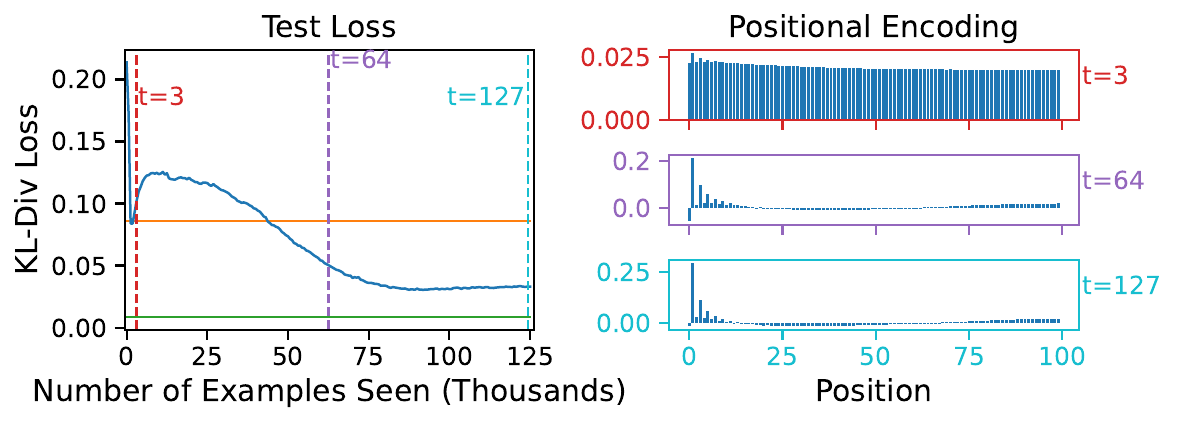}
    \includegraphics[width=0.36\textwidth]{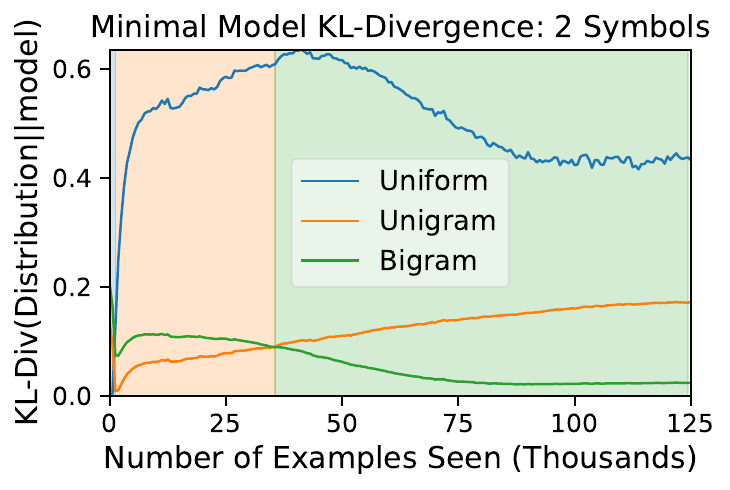}
    \caption{A comparison of the two layer attention only transformer and minimal model for $k=2$ symbols. Note the alternating pattern in the positional encodings at 64,000 examples seen.}
    \label{fig:2symb_pos}
\end{figure*}


\begin{figure*}
    \centering
    \includegraphics[width=\textwidth/2*9/10]{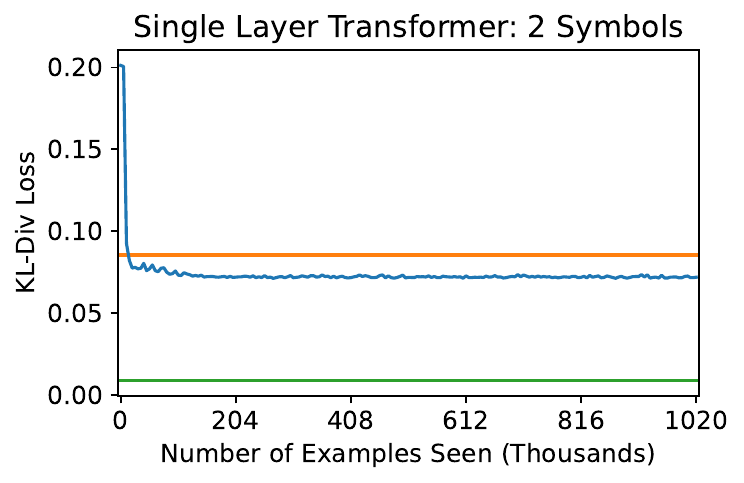}
    \includegraphics[width=\textwidth/2*9/10]{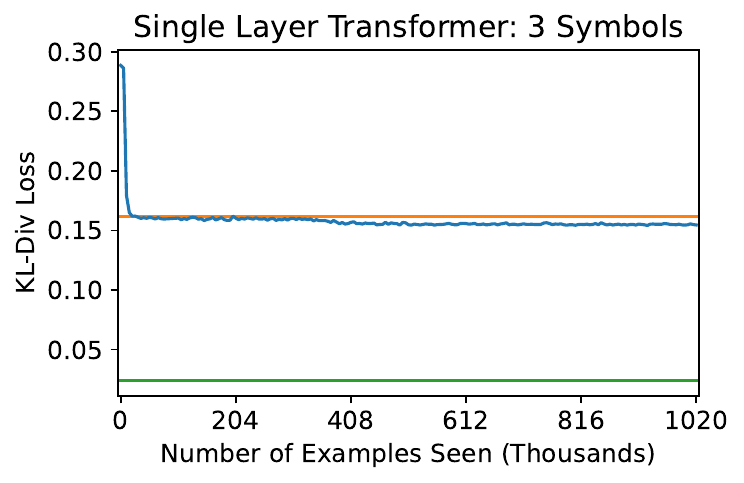}
    \caption{Graphs of test loss showing that a single layer transformer can not achieve good performance on ICL-MC. This result holds for transformers with or without MLPs, and with absolute or relative positional encodings. These graphs show that even trained 8 times longer, there is no notable increase in performance beyond the unigrams strategy (orange line).}
    \label{fig:one_layer}
\end{figure*}

\end{document}